\newcommand{\gc}[1]{\cellcolor{gray!35} #1}
\newcommand{\bW}{\mathbf{W}}
\newcommand{\bs}{\mathbf{s}}
\newcommand{\bv}{\mathbf{v}}
\newcommand{\thr}{{\rm{th}}}
\newcommand{\relu}{{\rm ReLU} }
\newcommand{\lup}{{(l)}}
\theoremstyle{definition}  
\definecolor{SkyBlue}{rgb}{0.53, 0.81, 0.92} 
\icmltitlerunning{Temporal Misalignment and Probabilistic Neurons}
\begin{document}

\twocolumn[
\icmltitle{Temporal Misalignment in ANN-SNN Conversion and \\ Its Mitigation via Probabilistic Spiking Neurons}



\icmlsetsymbol{equal}{*}

\begin{icmlauthorlist}
\icmlauthor{V. X. BojWu}{equal,amal}
\icmlauthor{Velibor Bojkovi\'c}{equal,mbzuai}
\icmlauthor{Xiaofeng Wu}{equal,citym}
\icmlauthor{Bin Gu}{juni}
\end{icmlauthorlist}

\icmlaffiliation{amal}{Amalgamation of first authors' names}
\icmlaffiliation{mbzuai}{Department of ML, MBZUAI, Abu Dhabi, UAE}
\icmlaffiliation{citym}{Faculty of Data Science, City University of Macau, Macau, China}
\icmlaffiliation{juni}{School of Artificial Intelligence, Jilin University, China}

\icmlcorrespondingauthor{Velibor Bojkovi\'c}{first.last@mbzuai.ac.ae}


\vskip 0.3in
]



\printAffiliationsAndNotice{\icmlEqualContribution} 

\begin{abstract}

Spiking Neural Networks (SNNs) offer a more energy-efficient alternative to Artificial Neural Networks (ANNs) by mimicking biological neural principles, establishing them as a promising approach to mitigate the increasing energy demands of large-scale neural models. However, fully harnessing the capabilities of SNNs remains challenging due to their discrete signal processing and temporal dynamics. ANN-SNN conversion has emerged as a practical approach, enabling SNNs to achieve competitive performance on complex machine learning tasks. In this work, we identify a phenomenon in the ANN-SNN conversion framework, termed \textit{temporal misalignment}, in which random spike rearrangement across SNN layers leads to performance improvements. Based on this observation, we introduce biologically plausible two-phase probabilistic (TPP) spiking neurons, further enhancing the conversion process. We demonstrate the advantages of our proposed method both theoretically and empirically through comprehensive experiments on CIFAR-10/100, CIFAR10-DVS, and ImageNet across a variety of architectures, achieving state-of-the-art results.
\end{abstract}

\section{Introduction}
\label{sec:intro}

\begin{figure}
\centering
\includegraphics[width=\linewidth]{./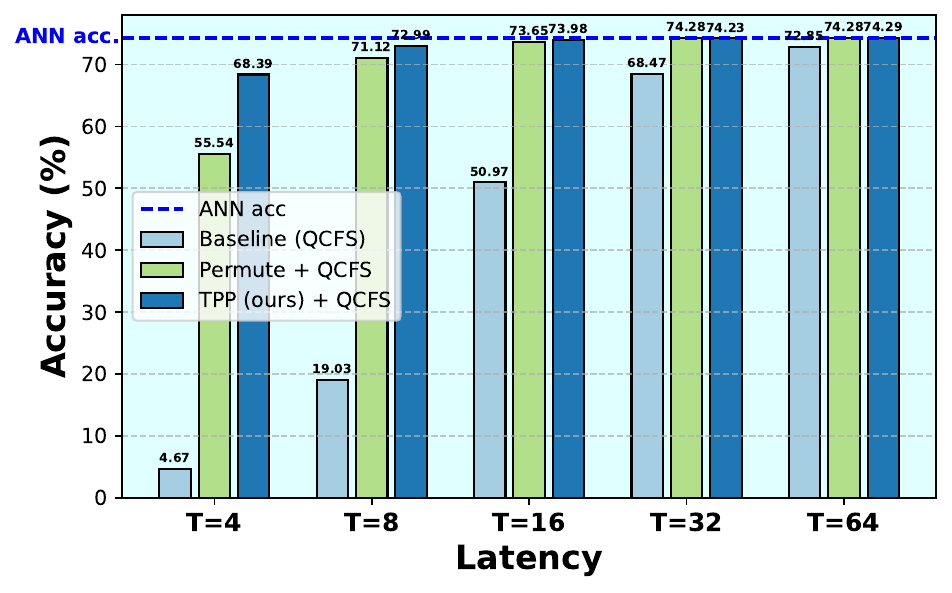}
\caption{\footnotesize The initial experiment: After ANN-SNN conversion, we compared the accuracy of the baseline model QCFS \citep{bu2022optimal} with its ``permuted'' version and our proposed TPP neurons (setting is VGG16 - ImageNet, ANN acc. 74.29\%).}
\label{fig-init-exps}
\vspace{-10pt}
\end{figure}
\vspace{-2pt} 

Spiking neural networks (SNNs), often referred to as the third generation of neural networks~\cite{maas1997third}, closely mimic biological neuronal communication through discrete spikes~\cite{mcculloch1943logical,hodgkin1952quantitative,izhikevich2003simple}. While biological energy efficiency has historically influenced neural network design, SNNs uniquely achieve this through event-driven processing: weighted inputs integrate into membrane potentials, emitting binary spikes only upon crossing activation thresholds. This differs significantly from artificial neural networks (ANNs)~\cite{braspenning1995artificial}, which are based on continuous floating-point operations that require energy-intensive and computationally costly multiplication operations~\cite{roy2019towards}. The spike-driven paradigm inherently circumvents these costly computations, suggesting that SNNs may offer a promising approach for energy-efficient AI. Recent developments in neuromorphic hardware~\cite{pei2019towards, debole2019truenorth, loihi2, DBLP:journals/corr/abs-2312-17582} have enabled efficient SNN deployment. These specialized chips are inherently designed for spike-based computation, driving breakthroughs across domains: object detection~\cite{kim2020spiking, cheng2020lisnn}, tracking~\cite{yang2019dashnet}, event-based vision~\cite{zhu2022event, DBLP:journals/iclr/SpikePoint}, speech recognition~\cite{DBLP:conf/aaai/WangZHWZX23}, and generative AI through models like SpikingBERT~\cite{bal2023spikingbert} and SpikeGPT~\cite{DBLP:journals/corr/abs-2302-13939, wang2023masked}. These advancements underscore the potential of SNNs as a viable alternative to conventional ANNs.

Training SNNs is inherently challenging due to the same characteristics that confer their advantages: their discrete processing of information. Unsupervised direct training, inspired by biological learning mechanisms, leverages local learning rules and spike timing to update weights~\cite{diehl2015unsupervised}. While these methods are computationally efficient and can be executed on specialized hardware, SNNs trained in this manner often underperform compared to models trained with alternative approaches. Further research is needed to better understand and improve this method. In contrast, supervised training can be categorized into direct training via spatio-temporal backpropagation (e.g., surrogate gradient methods~\cite{neftci2019surrogate}) and ANN-SNN conversion methods~\cite{diehl2015fast, cao2015spiking}. The present work focuses on the latter approach.

\begin{figure}[!t]
  \centering
  \includegraphics[width=0.9\linewidth]{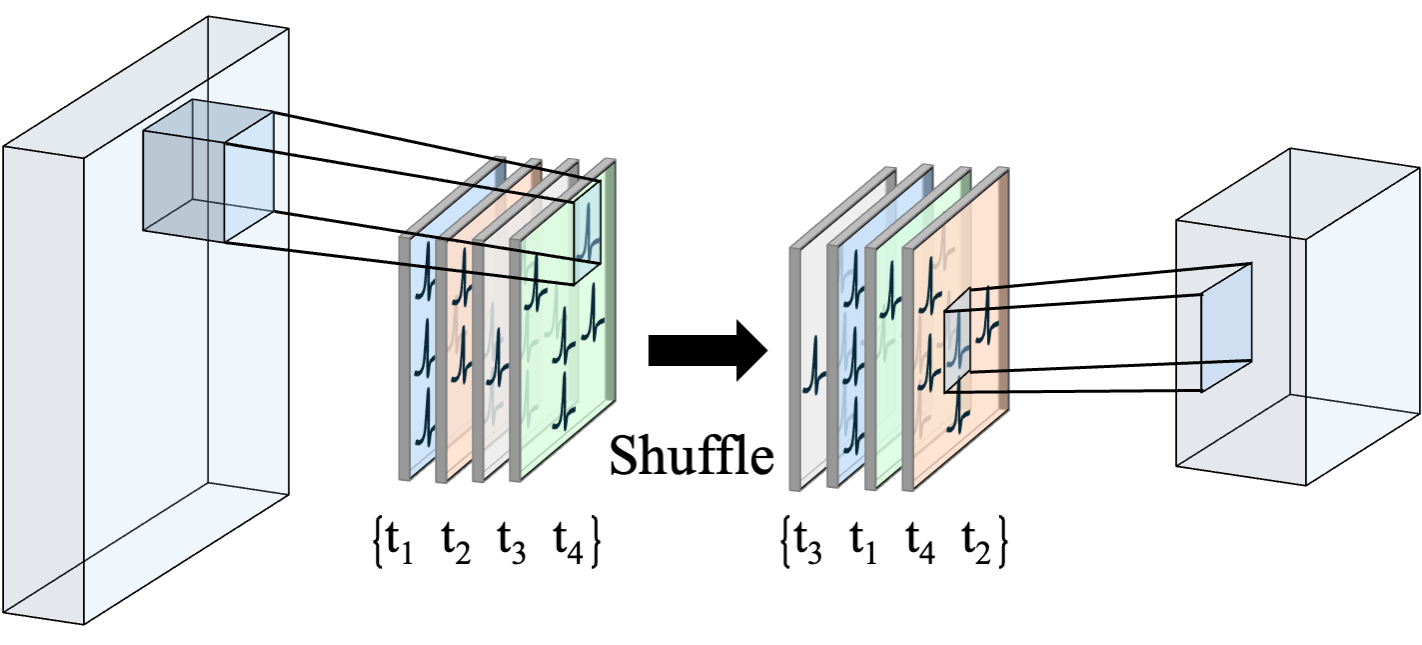}
  \caption{Spike train permutation: spikes at different time steps are shuffled to alter their temporal order.}
  \label{fig:permutation}
\end{figure}

The core idea of ANN-SNN conversion is to leverage pre-trained ANN models to train SNNs. This process begins by transferring the weights from the ANN to the SNN, which shares the same architecture, and initializing the spiking neuron parameters (e.g., thresholds, time constants) so that the spike rates approximate the activation values of the corresponding ANN layers. This method is advantageous because it typically requires no additional computation for training the SNN, thereby eliminating the need for gradient computation or limiting it to fine-tuning the SNN.

\textbf{The initial experiment: Temporal information in SNNs after ANN-SNN conversion.} We begin by identifying a counter-intuitive phenomenon that, to the best of our knowledge, has not been previously documented. We investigate the extent to which individual spike timing affects the overall performance of the SNN model in ANN-SNN conversion. We explore this question in the context of various types of conversion-related issues described in the literature, such as \textit{phase lag}~\cite{li2022bsnn} and \textit{unevenness of the spike input}~\cite{bu2022optimal}, where prior work has indicated that the timing of spike trains in SNNs obtained through ANN-SNN conversion may not be optimal and can lead to performance degradation under low-latency conditions.

\begin{figure}[!ht]
  \centering
  \includegraphics[width=0.95\linewidth]{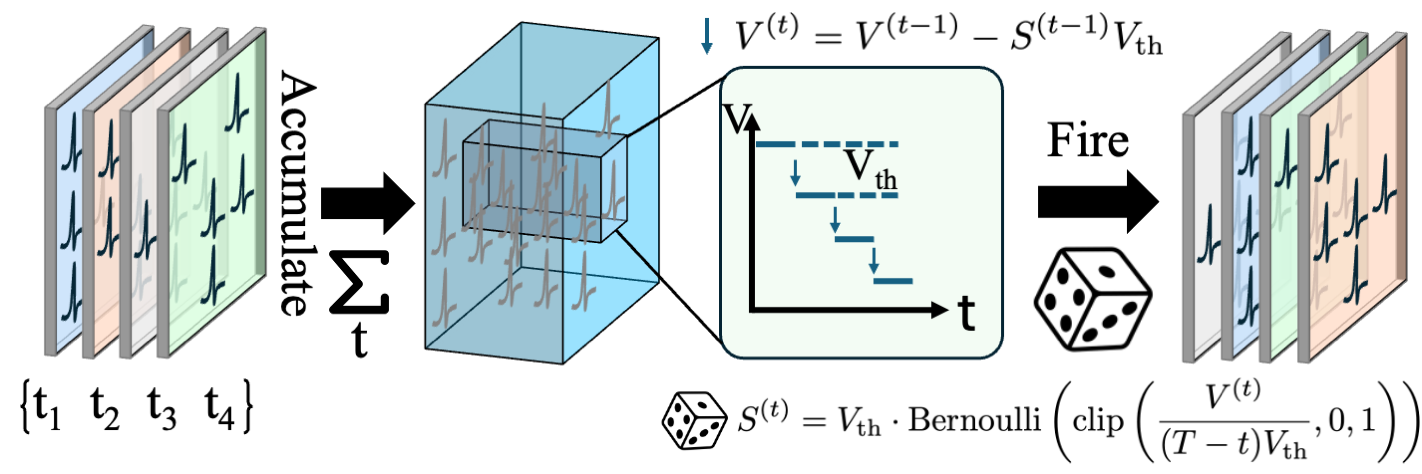}
  \caption{Two-Phase Probabilistic (TPP) spiking neuron mechanism operates in two phases: accumulation of inputs and probabilistic spiking based on membrane potential. The model uses a Bernoulli process for spike generation, with membrane potential updates over time steps.}
  \label{fig:prob-mode}
  \vspace{-5pt}
\end{figure}

In our initial experiment, we began by examining several widely-used ANN-SNN conversion methods and analyzing the spike outputs of spiking layers in the corresponding SNN models. To evaluate the importance of spike timing relative to firing rates, we randomly permuted the spike trains after each spiking layer. Specifically, when processing samples through the baseline model, we rearranged the temporal order of spikes within each spike train after each spiking layer (see Figure \ref{fig:permutation}). The permuted spike trains were then passed to the subsequent layer, and this process was repeated for each layer until the output layer. The results of one of these initial experiments, comparing the performance of the ``permuted'' model with the baseline, are presented in Figure~\ref{fig-init-exps}. For each latency, we conducted multiple trials with different permutations, and the ``permuted'' model consistently outperformed the baseline, achieving the original ANN accuracy at lower latencies. The impact of permutations on performance was particularly pronounced at lower latencies.

We refer to this phenomenon as ``\textit{temporal misalignment}'' in ANN-SNN conversion and further investigate it by providing a conceptual interpretation in the form of bursting probabilistic spiking neurons, which are designed to mimic the effects of permutations in SNNs. The proposed neurons operate in two phases, as illustrated in Figure~\ref{fig:prob-mode}. In the first phase, they accumulate input, often surpassing the threshold, while in the second phase, they emit spikes probabilistically with varying temporal probabilities. Our proposed spiking neurons are characterized by two key properties: (1) the accumulation of membrane potential beyond the threshold, resulting in a firing phase, commonly referred to as bursting, and (2) probabilistic firing. Both properties exhibit biological plausibility and have been extensively studied in the neuroscience literature (see Section~\ref{sec bio inspired}).

The main contributions of this paper are summarized as:

$\bullet{}$ We recognize and study the ``temporal misalignment'' phenomenon in ANN-SNN conversion, and we propose a framework for its exploitation in ANN-SNN conversion utilizing two-phase probabilistic spiking neurons. We provide theoretical insights into their functioning and superior performance, as well as support for their biological grounding.

$\bullet{}$ We present a comprehensive experimental validation demonstrating that our method outperforms SOTA conversion as well as the other training methods in terms of accuracy on CIFAR-10/100, CIFAR10-DVS, and ImageNet datasets.

\section{Background and Related Work} \label{sec:preliminaries} 
The base model employed in this work is Integrate-and-Fire (IF) spiking neuron, whose internal dynamics, after discretization, are given by the equations
\begin{align}\label{eq: dynamics}
\bv^\lup[t] &= \bv^\lup[t-1] +\bW^\lup \theta^{(l-1)}\cdot \bs^{(l-1)}[t]-\theta^\lup \cdot\bs[t-1],\\
\bs^\lup[t] &= H(\bv^\lup[t]- \theta^\lup).
\end{align}
Here, $\theta^\lup$ is the threshold, $H(\cdot)$ is the Heaviside function, while the superscript $l$ denotes the layer in the SNN. Later, we will modify these equations and use more advanced neuron models. By expanding the equations over $t=1,\dots,T$, and rearranging the terms, we obtain
\begin{align}
 \theta^\lup \frac{\sum_{t=1}^T\bs^{(l)}[t]}{T} &= \bW^{(l)}V^{(l-1)}_\thr\frac{\sum_{t=1}^T\bs^{(l-1)}[t]}{T}\label{eq: rel}\\
 &+\frac{\bv^{(l)}[T] - \bv^{(l)}[0]}{T}.\label{eq: rel err}
 \end{align}
On the ANN side, the transformation between layers is given by 
\begin{equation}\label{eq: ann eq}
a^{(l)}= \mathcal{A}^{(l)}(\bW^{(l)} a^{(l-1)}),
\end{equation}
where $\mathcal{A}^\lup$ is the activation function. The ANN-SNN conversion process begins by transferring the weights (and biases) of a pre-trained ANN to an SNN with the same architecture. By comparing the equations for the ANN outputs \eqref{eq: ann eq} and the average output of the SNN \eqref{eq: rel} \cite{rueckauer2016theory}, the goal is to achieve a relation of the form
 \begin{equation}\label{eq: ann snn}
 a^{(l)}_i \approx  V^{(l)}_\thr \frac{\sum_{t=1}^T\bs^{(l)}_i[t]}{T}.
 \end{equation}
The most commonly used activation function $\mathcal{A}$ is \relu, due to its simplicity and non-negative output, which aligns well with the properties of IF neurons. It is important to note the importance of the three components in the conversion: 1) the threshold value $\theta$, 2) the initialization $\bv[0]$, 3) the ANN activation function $\mathcal{A}$.

\subsection{Related work}

\textbf{ANN-SNN Conversion}. This methodology aligns ANNs and SNNs through activation-firing rate correspondence, as initially demonstrated in~\cite{rueckauer2016theory, cao2015spiking}. Subsequent research has systematically improved conversion fidelity through four principal approaches: (i) weight normalization~\cite{diehl2015fast}, (ii) soft-reset mechanisms~\cite{rueckauer2017conversion, han2020rmp}, (iii) adaptive threshold configurations~\cite{stockl2021optimized, ho2021tcl, DBLP:journals/tnn/WuCZLLT23}, and (iv) spike coding optimization~\cite{DBLP:conf/nips/KimKK20a, Sengupta2018Going}. Recent innovations focus on ANN activation function adaptations, including thresholded ReLU~\cite{ding2021optimal} and quantization strategies~\cite{bu2022optimal, liu2022spikeconverter, hu2023fast, shen2024conventional}. However, these approaches introduce inherent accuracy-compression tradeoffs. Parallel efforts modify integrate-and-fire neuron dynamics~\cite{DBLP:conf/ijcai/Li022, wang2022signed, liu2022spikeconverter}, with~\cite{liu2022spikeconverter} proposing a dual-phase mechanism resembling our approach.

Crucial for achieving high conversion efficacy, threshold initialization methodologies employ layer-wise activation maxima or percentile statistics~\cite{rueckauer2016theory, DBLP:conf/iclr/DengG21, li2021free, wu2024ftbc}, augmented with post-conversion weight calibration~\cite{li2021free, bojkovic2024data}. Contemporary strategies can be categorized into two paradigms: (1) ANN activation quantization for temporal efficiency at the cost of accuracy, and (2) SNN neuron modification preserving ANN expressiveness with extended temporal requirements. Our methodology adheres to the second paradigm.

\textbf{Direct Training}. This approach leverages spatio-temporal spike patterns through backpropagation-through-time with differentiable gradient approximations~\cite{DBLP:conf/iclr/OConnorGRW18, zenke2018superspike, wu2018spatio, bellec2018long, fang2021deep, fang2021incorporating, zenke2021remarkable, mukhoty2024direct}. Advancements encompass joint optimization of synaptic weights and neuronal parameters (threshold dynamics~\cite{wei2023temporal}, leakage factors~\cite{DBLP:journals/tnn/RathiR23}), novel loss formulations for spike distribution regularization~\cite{zhu2024exploring, guo2022recdis}, and hybrid conversion-training pipelines~\cite{DBLP:journals/corr/abs-2205-07473}. State-of-the-art developments introduce ternary spike representations for enhanced information density~\cite{DBLP:conf/aaai/GuoCLPZHM24} and reversible architectures for memory-efficient training~\cite{zhang2023memory}.

\section{Methodology} 
\label{sec:method}

\subsection{Motivation}

In ANN-SNN conversion methodologies, constant and rate coding are commonly utilized in the resulting spiking neural network models, based on the principle that the expected input at each time step matches the original input to the ANN model. Notably, the encoding lacks temporal information, as spike timing does not convey additional information. In constant encoding, this is evident, while in rate encoding, for a fixed input channel, the spike probability remains constant across all time steps, with the channel value assumed to lie between 0 and 1.

The resulting SNN model is initialized to approximate the outputs of the original ANN model based on the principle that, for each spiking neuron, the expected number of spikes it generates should closely match the output of the corresponding ANN neuron. In particular, it is assumed that no temporal information is present throughout the SNN model; that is, the spike train outputs of each SNN layer should convey no additional temporal information beyond spike firing rates.

Previous studies examining conversion errors and their classifications \cite{li2022bsnn, bu2022optimal, ding2021optimal, bojkovic2024data} suggest that SNNs obtained through ANN-SNN conversion may generate spikes that are suboptimally positioned in the temporal domain, leading to a degradation in model performance, particularly at low latencies. Our initial experiments (see Introduction and Figure~\ref{fig-init-exps}) further validate this observation while also revealing a novel insight: random temporal displacements of spike trains after spiking layers significantly enhance model performance. This phenomenon, which we refer to as temporal misalignment -- wherein the original spike trains exhibit temporal misalignment, thereby impairing model performance -- serves as the foundation and motivation for our proposed method, which is elaborated upon in the next section. Additional experiments on permutations in the ANN-SNN context, along with an explanation of their impact on model performance, are provided in Appendix \ref{app permutations}.

\subsection{From permutations to Bursting Probabilistic Spiking Neurons}\label{sec latency}


This work aims to address the following question: How to incorporate the action of permutation of the output spike trains into the dynamics of the spiking neurons? We approach this problem in two steps.


Consider the scenario where spike trains from layer $\ell$ are to be permuted. A general approach involves introducing a ``permutator'' -- a subsequent layer tasked with collecting all incoming spikes and re-emitting them in a permuted manner, as illustrated in Figure \ref{fig:permutation}. This inherently implies the \textbf{two-phase} nature of the ``permutator'': specifically, during the first phase, incoming spikes are accumulated, and firing is deferred until the onset of the second phase, where spikes are emitted.

The second step focuses on the output mechanism of the ``permutator''. Specifically, it is desirable to design a spiking neuron mechanism that retains the stochastic component of the permutations. This consideration motivates the adoption of probabilistic firing in spiking neurons.


The final question we explore is whether a more compact approach can be achieved by employing probabilistic spiking neurons that aggregate weighted inputs from the previous layer, rather than directly processing spikes from a spiking layer.

\textbf{TPP neurons}: To address the aforementioned questions, we propose \textbf{two-phase probabilistic spiking neurons} (TPP) (see Figure \ref{fig:prob-mode}). Specifically, in the first phase, the neurons will only accumulate the (weighted) input coming from the previous layer, while in the second phase, the neurons will spike. More precisely, suppose that at a particular layer $\ell$ the spiking neurons accumulate the whole output of the previous layer, without emitting spikes. Denote the accumulated membrane potential by $\bv^\lup[0]$. The subsequent spiking phase is governed by the following equations:
\begin{equation}\label{eq probabilistic spiking}
\begin{aligned}
\bs^\lup[t] &= B\left(\frac{1}{\theta^\lup\cdot (T-t+1)}\cdot\bv^\lup[t-1]\right),\\
\bv^\lup[t] &= \bv^\lup[t-1] -\theta^\lup \cdot\bs^\lup[t],
\end{aligned}
\end{equation}
where $t=1,\dots, T$. Here, $B(x)$ is a Bernoulli random variable with bias $x$, extended for $x\in \mathbb{R}$ in a natural way ($B(x)= B(\max(\min(x,1),0))$). If the weights of the SNN network are not normalized, the produced spikes will be scaled with the thresholds $\theta^\lup\cdot \bs^\lup[t]$, before being sent to the next layer.

We observe that the presence of $T-t+1$ in the denominator of the bias in $B$ demonstrates that the probability of spiking depends not only on the current membrane potential, but also on the temporal step: in the absence of spiking, for the same membrane potential, the probability of spiking increases through time. Figure \ref{fig:prob-mode} provides a visual representation of the functioning of TPP neurons.





The following theorem provides a comprehensive characterization of the functioning of TPP neurons and their applications in ANN-SNN conversion when approximating ANN outputs (here $\relu_\theta(x)=\min(\relu(x),\theta)$). 

\begin{restatable}{theorem}{mainthm}\label{thm main}
    Let $X^{(l)}$ be the input of the ANN layer with \relu activation and suppose that, during the accumulation phase, the corresponding SNN layer of TPP neurons accumulated $T\cdot X^{(l)}$ quantity of voltage. \\
        (a) For every time step $t=1,\dots,T$, we have         
        \begin{equation}
            \frac{(T-t+1)\cdot\theta^\lup}{T}\cdot \mathbb{E}\left[\sum_{i=1}^t s^{(l)}[i]\right] = \relu_{\theta^\lup}(X^{(l)}).
        \end{equation}        
        (b) Suppose that for some $t=1,\dots,T$, the TPP layer produced $s^{(l)}[1],\dots,s^{(l)}[t-1]$ vector spike trains for the first $t-1$ steps, and the residue voltage for neuron $i$ is higher than zero. Then,
        \begin{equation}
            \frac{(T-t+1)\cdot\theta^\lup}{T}\cdot\mathbb{E}\left[s_i^{(l)}[t]\right]+\frac{\theta^\lup}{T}\cdot\sum_{i=1}^{t-1} s_i^{(l)}[i] = \relu_{\theta^\lup}(X_i^{(l)}).
        \end{equation}        
        (c) If $s^{(l)}[1],\dots,s^{(l)}[T]$ are the output vectors of spike trains of the TPP neurons during $T$ time steps, then        
        \begin{equation}
            \frac{\theta^\lup}{T}\cdot\sum_{i=1}^{T} s^{(l)}_j[i]=\relu_{\theta^\lup}(X^{(l)}_j),
        \end{equation}
        if $\relu_{\theta^\lup}(X^{(l)}_j)$ is a multiple of $\frac{\theta^\lup}{T}$, or        
        \vspace{-5pt}
        \begin{align}
            \frac{\theta^\lup}{T}\cdot\sum_{i=1}^{T} s^{(l)}_j[i]&=\frac{\theta^\lup}{T}\cdot \lfloor\frac{T}{\theta^\lup}\relu_{\theta^\lup}(X^{(l)}_j)\rfloor\label{eq out 1}\\
            &\text{ or } \frac{\theta^\lup}{T}\cdot \lfloor\frac{T}{\theta^\lup}\relu_{\theta^\lup}(X^{(l)}_j)\rfloor+\frac{\theta^\lup}{T},\label{eq out 2}
        \end{align}        
        if $\relu_{\theta^\lup}(X^{(l)}_j)$ is not a multiple of $\frac{\theta^\lup}{T}$.
        
        $(d)$ Suppose that $\max X^{(l)}\leq \theta$ and that the same weights $W^{(l+1)}$ act on the outputs of layer $(l)$ of ANN and SNN as above, and let $X^{(l+1)}$ (resp. $T\cdot \tilde{X}^{(l+1)}$) be the inputs to the $(l+1)$th ANN layer (resp. the accumulated voltage for the $(l+1)$th SNN layer of TPP neurons), Then 
        \begin{equation}\label{eq final approximation}
            ||X^{(l+1)}-\tilde{X}^{(l+1)}||_{\infty} \leq ||W^{(l+1)}||_\infty\cdot \frac{\theta^\lup}{T}.
        \end{equation} 
        \vspace{-5pt}
\end{restatable}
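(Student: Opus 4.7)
The plan is to exploit the deterministic conservation law implied by \eqref{eq probabilistic spiking}, namely $\bv^\lup[t] = \bv^\lup[0] - \theta^\lup \sum_{i=1}^{t}\bs^\lup[i]$, together with the explicit conditional Bernoulli mean $\mathbb{E}[\bs^\lup[t]\mid \bv^\lup[t-1]] = \bv^\lup[t-1]/(\theta^\lup(T-t+1))$, valid whenever that ratio lies in $[0,1]$. All four parts will follow from combining these two identities; the only real subtlety lies in the stopping-time behaviour of $\bv^\lup[t]$ needed for (c).

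I would start with (b), which is the most direct. Plugging $\bv^\lup[t-1] = TX^\lup - \theta^\lup \sum_{i=1}^{t-1}\bs^\lup[i]$ into the conditional Bernoulli mean and multiplying by $(T-t+1)\theta^\lup/T$ produces $X^\lup - (\theta^\lup/T)\sum_{i<t}\bs^\lup[i]$; the last term cancels exactly with the $(\theta^\lup/T)\sum_{i<t}\bs^\lup[i]$ summand already sitting on the left-hand side, leaving $X^\lup = \relu_{\theta^\lup}(X^\lup)$. The hypothesis $\bv^\lup[t-1]>0$ rules out lower clamping of the Bernoulli bias, and the regime $\max X\le \theta^\lup$ invoked in (d) later rules out upper clamping, as one checks inductively via the same conservation law. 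Part (a) then follows by taking the unconditional expectation of this cancellation identity and doing induction on $t$, with base case $t=1$ immediate from $\bv^\lup[0]=TX^\lup$.

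The heart of the argument is (c), for which I would run a stopping-time analysis of $\bv^\lup[t]$. The key structural observation is that $\bv^\lup[t]$ can become negative only on a step where $\bs^\lup[t]=1$ while $\bv^\lup[t-1]\in(0,\theta^\lup)$, and once $\bv^\lup[t]<0$ the Bernoulli bias is non-positive and no further spikes occur. This yields two mutually exclusive scenarios. In case (i), $\bv^\lup[t]\ge 0$ for every $t\le T$: then $TX^\lup-\theta^\lup M_T\ge 0$ gives $M_T\le \lfloor TX^\lup/\theta^\lup\rfloor$, and whenever the Bernoulli argument exceeds $1$ the spike is deterministic, which by a downward induction on the gap $TX^\lup/\theta^\lup - M_t$ excludes $M_T<\lfloor TX^\lup/\theta^\lup\rfloor$ and forces equality. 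In case (ii), some first $t^*$ has $\bv^\lup[t^*]<0$: then $M_T=M_{t^*}$, and the pre-spike voltage $\bv^\lup[t^*-1]\in(0,\theta^\lup)$ pins down $TX^\lup/\theta^\lup<M_T<TX^\lup/\theta^\lup+1$, giving $M_T=\lceil TX^\lup/\theta^\lup\rceil$. The two values collapse iff $TX^\lup/\theta^\lup\in\mathbb{Z}$, which is exactly the dichotomy in (c); the boundary $X^\lup>\theta^\lup$ is handled separately by noting the Bernoulli argument stays saturated at $1$ throughout, so $M_T=T$ and $(\theta^\lup/T)M_T = \theta^\lup = \relu_{\theta^\lup}(X^\lup)$.

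Part (d) is then an immediate componentwise consequence of (c) combined with the standard operator-norm bound $\|Wv\|_\infty\le \|W\|_\infty\|v\|_\infty$. By (c), each coordinate of $(\theta^\lup/T)M_T^\lup$ differs from $\relu_{\theta^\lup}(X^\lup)$ by at most $\theta^\lup/T$ in absolute value, and by construction $\tilde X^{(l+1)}=W^{(l+1)}(\theta^\lup/T)M_T^\lup$ while $X^{(l+1)}=W^{(l+1)}\relu_{\theta^\lup}(X^\lup)$ (the assumption $\max X^\lup\le\theta^\lup$ is what lets us identify $\relu$ with $\relu_{\theta^\lup}$), so \eqref{eq final approximation} drops out directly. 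The main obstacle is (c): one has to simultaneously control the forced-spike mechanism (to exclude $M_T<\lfloor\cdot\rfloor$ in case (i)) and correctly identify the stopping time (to pin down $M_T=\lceil\cdot\rceil$ in case (ii)); everything else is routine algebra once the conservation law is in hand.
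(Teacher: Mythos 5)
Your proposal is correct and follows essentially the same route as the paper: unroll the conservation law $\bv^{(l)}[t-1]=T X^{(l)}-\theta^{(l)}\sum_{i<t}s^{(l)}[i]$ to write the Bernoulli bias as $\frac{1}{T-t+1}\bigl(T X^{(l)}/\theta^{(l)}-\sum_{i<t}s^{(l)}[i]\bigr)$, prove (b) by the cancellation in the conditional mean (with the clipped cases $X^{(l)}\le 0$ and $X^{(l)}\ge\theta^{(l)}$ handled separately), deduce (a) from (b), count spikes pathwise for (c), and obtain (d) from (c) via the $\ell_\infty$ operator-norm bound. If anything, your stopping-time/forced-spike analysis of (c) is more detailed than the paper's one-line justification (which only remarks that in the divisible case the residue stays a multiple of $\theta^{(l)}$ and that the remaining case ``follows from (b)''), while your derivation of (a) from (b) by unconditional expectation is left at the same level of informality as the paper's corresponding step, since (b) is only established conditionally on a positive residue.
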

\textbf{Remarks:} The proof of the previous result is presented in the Appendix. Here, we offer an interpretation of its statements.

    $\bullet{}$ We contrast the statement $(a)$ with Theorem 2 of \cite{bu2022optimal}. Specifically, the authors demonstrate that if the membrane potential is initialized at half of the threshold, the expectation of the conversion error (layerwise) is 0. However, this result in \cite{bu2022optimal} relies on the underlying assumption that the layerwise distribution of ANN activation values is \textbf{uniform}, which does not generally hold in practice (see, for example, \cite{bojkovic2024data}). Our result $(a)$ above shows that \textbf{after every $t\leq T$ time steps}, our expected spiking rate aligns well with the clipping of the $\relu$ activation by the threshold, as it should, without imposing any prior assumptions on the distribution of the ANN activation values.
    
    $\bullet{}$ Result $(b)$ demonstrates that the activity of a TPP neuron \textbf{adapts to the output} it already produced. In particular, as long as the neuron is still active and contains residual membrane potential, the expectation of its output at the next time step takes into account the previously produced spikes and will yield the ANN counterpart.
    
    $\bullet{}$ The results $(c)$ and $(d)$ show that during the accumulation phase, the TPP neuron closely approximate ANN neurons with \relu activation. In particular, the only remaining source of errors in layerwise approximation is the clipping error due to the threshold $\theta$, and the quantization error due to the discrete outputs of the spiking neurons. We also note in equations \eqref{eq out 1} and \eqref{eq out 2} two possibilities for the output, which come from the probabilistic nature of spiking.

\subsection{Bio-plausibility and hardware implementation of TPP neurons}\label{sec bio inspired}
Our proposed neurons have two distinct properties: The two-phase regime and probabilistic spike firing. Both properties are biologically plausible and extensively studied in the neuroscience literature. For example, the two phase regime is related to firing after a delay of biological spiking neurons, where a neuron collects the input beyond the threshold value and fires after delay or after some condition is met. It could also be related to the bursting, when a biological neuron starts emitting bursts of spikes, after a certain condition is met, effectively dumping their accumulated potential. See \cite{izhikevich2007dynamical, connors1990intrinsic,llinas1982electrophysiology,krahe2004burst} for more details. 

On the other side, stochastic firing of biological neurons has been well studied as well, and different aspects of noise introduction into firing have been proposed. Refer to \cite{shadlen1994noise,faisal2008noise,softky1993highly,maass1997networks,pagliarini2019probabilistic,stein2005neuronal} for some examples.

Regarding the implementation of TPP neurons on neuromorphic hardware, two phase regime can be easily achieved on various modern neuromorphic that support programmable spiking neurons. Stochastic firing can be achieved through random sampling which, for example, is supported by IBM TrueNorth \cite{merolla2014million}, Intel Loihi \cite{davies2018loihi}, BrainScaleS-2 \cite{pehle2022brainscales}, SpiNNaker \cite{furber2014spinnaker} neuromorphic chips. For example, TrueNorth incorporates stochastic neuron models using on-chip pseudo-random number generators, enabling probabilistic firing patterns that mirror our approach. Similarly, Loihi~\cite{gonzalez2024spinnaker2} supports stochastic operations by adding uniformly distributed pseudorandom noise to neuronal variables, facilitating the implementation of probabilistic spiking neurons.

To reduce the overall latency for processing inputs with our models, which yields linear dependence on the number of layers (implied by the two phase regime), we note that as soon as a particular layer has finished the firing phase, it can start receiving the input from the previous one: The process of classifying a dataset can be serialized. This has already been observed, for example in \cite{liu2022spikeconverter}. Neuromophic hardware implementation of this serialization has been proposed as well, see for example \cite{das2023design, song2021design, varshika2022design}.

\section{Experiments} 

In this section, we verify the effectiveness and efficiency of our proposed methods. We compare it with state-of-the-art methods for image classification via converting ResNet-20, ResNet-34~\cite{he2016residual}, VGG-16~\cite{simonyan2015deep}, RegNet~\cite{DBLP:conf/cvpr/RadosavovicKGHD20} on CIFAR-10~\cite{lecun1998gradient, krizhevsky2010cifar}, CIFAR-100~\cite{krizhevsky2009learning}, CIFAR10-DVS~\cite{li2017cifar10} and ImageNet~\cite{deng2009imagenet}. Our experiments use PyTorch~\cite{paszke2019pytorch}, PyTorch vision models~\cite{torchvision2016}, and the PyTorch Image Models (Timm) library~\cite{rw2019timm}.

To demonstrate the wide applicability of the TPP neurons and the framework we propose, we combine them with three representative methods of ANN-SNN conversion from recent literature, each of which has their own particularities. These methods are: QCFS~\cite{DBLP:conf/iclr/BuFDDY022}, RTS~\cite{DBLP:conf/iclr/DengG21}, and SNNC~\cite{li2021free}. The particularity of QCFS method is that it uses step function instead of \relu in ANN models during their training, in order to obtain higher accuracy in lower latency after the conversion. RTS method uses thresholded \relu activation in ANN models during their training, so that the outliers are eliminated among the activation values, which helps to reduce the conversion error. Finally, SNNC uses standard ANN models with \relu activation, and performs grid search on the activation values to find optimal initialization of the thresholds in the converted SNNs. 

We initialize our SNNs following the standard ANN-SNN conversion process described in Section~\ref{sec:method} (and detailed in \ref{app conversion steps}), starting with a pre-trained model given by the baseline, or with training an ANN model using default settings in QCFS~\cite{DBLP:conf/iclr/BuFDDY022}, RTS~\cite{DBLP:conf/iclr/DengG21}, and SNNC~\cite{li2021free}. ANN ReLU activations were replaced with layers of TPP neurons initialized properly. All experiments were conducted using NVIDIA RTX 4090 and Tesla A100 GPUs. For comprehensive details on all setups and configurations, see Appendix~\ref{appendix:config}.

\subsection{Comparison with the State-of-the-art ANN-SNN Conversion methods}

We evaluate our approach against previous state-of-the-art ANN-SNN conversion methods, including ReLU-Threshold-Shift (RTS)~\cite{DBLP:conf/iclr/DengG21}, SNN Calibration (SNNC-AP)~\cite{li2021free}, Quantization Clip-Floor-Shift activation function (QCFS)~\cite{DBLP:conf/iclr/BuFDDY022}, SNM~\cite{wang2022signed}, Burst~\cite{DBLP:conf/ijcai/Li022}, OPI~\cite{bu2022optimized}, SRP~\cite{DBLP:conf/aaai/HaoBD0Y23}, DDI~\cite{bojkovic2024data} and FTBC~\cite{wu2024ftbc}.

\noindent \textbf{ImageNet dataset:} Table~\ref{tab:ann-snn-imagenet-cifar} compares the performance of our proposed methods with state-of-the-art ANN-SNN conversion methods on ImageNet. Our method outperforms the baselines across all simulation time steps for VGG-16, and RegNetX-4GF. For instance, on VGG-16 at $T=32$, our method achieves 74.72\% accuracy, surpassing other baselines even at $T=128$. Moreover, at $T=128$, our method nearly matches the original ANN performance with only a 0.12\% drop in VGG-16 and a 0.14\% drop in ResNet-34.

We see similar patterns in combining our methods with RTS and QCFS baselines, which use modified \relu activations to reduce conversion errors. Table \ref{tab:ann-snn-imagenet-cifar} shows these results. For instance, applying TPP with QCFS on ResNet-34 at $T=16$ improves performance from 59.35\% to 72.03\%, a 12.68\% increase. Similarly, for VGG-16 at $T=16$, combining TPP with QCFS boosts performance from 50.97\% to 73.98\%, a 23.01\% increase. Using TPP with RTS also shows significant improvements, such as a 12.82\% increase for VGG-16 at $T=16$. These results demonstrate the benefits of integrating TPP with other optimization approaches, solidifying its role as a comprehensive solution for ANN-SNN conversion challenges.

\begin{table*}[ht]
\vspace{-10pt}
\caption{Comparison between our method and the other ANN-SNN conversion methods on ImageNet and CIFAR-100. We provide the average accuracy and the associated standard deviation across 5 experiments.}
\label{tab:ann-snn-imagenet-cifar}
\centering
\scalebox{0.7}{
\begin{threeparttable}
\begin{tabular}{@{}cccccccccc@{}}
\toprule
Dataset & Arch. & Method & ANN & T=4  & T=8 & T=16 & T=32 & T=64 & T=128 \\
\toprule
\multirow{23}{*}{ImageNet} 
& \multirow{8}{*}{ResNet-34}
& RTS~\cite{DBLP:conf/iclr/DengG21}\textsuperscript{ICLR} & 75.66 & -- & -- & -- & 33.01 & 59.52 & 67.54 \\
& & SNNC-AP\tnote{*}~\cite{li2021free}\textsuperscript{ICML} & 75.66 & -- & -- & -- & 64.54 & 71.12 & 73.45 \\
& & QCFS~\cite{DBLP:conf/iclr/BuFDDY022}\textsuperscript{ICLR} & 74.32 & -- & -- & 59.35 & 69.37 & 72.35 & 73.15 \\
& & SRP~\cite{DBLP:conf/aaai/HaoBD0Y23}\textsuperscript{AAAI} & 74.32 & \gc{66.71} & \gc{67.62} & 68.02 & 68.40 & 68.61 & --\\
& & FTBC(+QCFS)~\cite{wu2024ftbc}\textsuperscript{ECCV} & 74.32 & 49.94 & 65.28 & 71.66 & 73.57 & 74.07 & 74.23 \\
\cmidrule{3-10}
& & \textbf{Ours (TPP) + QCFS} & 74.32 & {37.23 (0.07)} & {67.32 (0.06)} & \gc{72.03 (0.02)} & 72.97 (0.03) & 73.24 (0.02) & 73.30 (0.02) \\
\cmidrule{3-10}
& & \textbf{Ours (TPP)\tnote{*} + SNNC w/o Cali.} & 75.65 & 2.69 (0.03) & 49.24 (0.23) & 69.97 (0.10) & \gc{74.07 (0.06)} & \gc{75.23 (0.03)} & \gc{75.51 (0.05)} \\
\cmidrule{2-10}
& \multirow{12}{*}{VGG-16}
& SNNC-AP\tnote{*}~\cite{li2021free}\textsuperscript{ICML} & 75.36 & -- & -- & -- & 63.64 & 70.69 & 73.32 \\
& & SNM\tnote{*}~\cite{wang2022signed}\textsuperscript{IJCAI} & 73.18 & -- & -- & -- & 64.78 & 71.50 & 72.86 \\
& & RTS~\cite{DBLP:conf/iclr/DengG21}\textsuperscript{ICLR} & 72.16 & -- & -- & 55.80 & 67.73 & 70.97 & 71.89 \\
& & QCFS~\cite{DBLP:conf/iclr/BuFDDY022}\textsuperscript{ICLR} & 74.29 & -- & -- & 50.97 & 68.47 & 72.85 & 73.97 \\
& & Burst~\cite{DBLP:conf/ijcai/Li022}\textsuperscript{IJCAI} & 74.27 & -- & -- & -- & 70.61 & 73.32 & 73.00 \\
& & OPI\tnote{*}~\cite{bu2022optimized}\textsuperscript{AAAI} & 74.85 & -- & 6.25 & 36.02 & 64.70 & 72.47 & 74.24 \\
& & SRP~\cite{DBLP:conf/aaai/HaoBD0Y23}\textsuperscript{AAAI} & 74.29 & 66.47 & 68.37 & 69.13 & 69.35 & 69.43 & --\\
& & FTBC(+QCFS)~\cite{wu2024ftbc}\textsuperscript{ECCV} & 73.91 & 58.83 & 69.31 & 72.98 & 74.05 & 74.16 & 74.21 \\
\cmidrule{3-10}
& & \textbf{Ours (TPP) + RTS} & 72.16 & 30.50 (1.19) & 56.69(0.67) & 67.34 (0.25) & 70.63 (0.11) & 71.75 (0.05) & 72.05 (0.03) \\
\cmidrule{3-10}
& & \textbf{Ours (TPP) + QCFS} & 74.22 & \gc{68.39 (0.08)} & \gc{72.99 (0.05)} & \gc{73.98 (0.07)} & 74.23 (0.03) & 74.29 (0.00) & 74.33 (0.01) \\
\cmidrule{3-10}
& & \textbf{Ours (TPP)\tnote{*} + SNNC w/o Cali.} & 75.37 & 54.14 (0.59) & 69.75 (0.27) & 73.44 (0.02) & \gc{74.72 (0.06)} & \gc{75.14 (0.02)} & \gc{75.25 (0.03)} \\
\cmidrule{2-10}
& \multirow{3}{*}{RegNetX-4GF}
& RTS~\cite{DBLP:conf/iclr/DengG21}\textsuperscript{ICLR} & 80.02 & -- & -- & -- & 0.218 & 3.542 & 48.60 \\
& & SNNC-AP\tnote{*}~\cite{li2021free}\textsuperscript{ICML} & 80.02 & -- & -- & -- & 55.70 & 70.96 & 75.78 \\ 
\cmidrule{3-10}
& & \textbf{Ours (TPP)\tnote{*} + SNNC w/o Cali.} & 78.45 & -- & -- & \gc{22.71 (2.98)} & \gc{66.51 (0.44)} & \gc{75.54 (0.07)} & \gc{77.83 (0.04)} \\
\midrule
\multirow{22}{*}{CIFAR-100}
& \multirow{10}{*}{ResNet-20}
& TSC\tnote{*}~\cite{han2020deep}\textsuperscript{ECCV} & 68.72 & -- & -- & -- & -- & -- & 58.42 \\
& & RMP\tnote{*}~\cite{han2020rmp}\textsuperscript{CVPR} & 68.72 & -- & -- & -- & 27.64 & 46.91 & 57.69 \\
& & SNNC-AP\tnote{*}~\cite{li2021free}\textsuperscript{ICML} & 77.16 & -- & -- & 76.32 & 77.29 & 77.73 & 77.63 \\
& & RTS~\cite{DBLP:conf/iclr/DengG21}\textsuperscript{ICLR} & 67.08 & -- & -- & 63.73 & 68.40 & 69.27 & 69.49 \\
& & OPI\tnote{*}~\cite{bu2022optimized}\textsuperscript{AAAI} & 70.43 & -- & 23.09 & 52.34 & 67.18 & 69.96 & 70.51 \\
& & QCFS\tnote{+}~\cite{DBLP:conf/iclr/BuFDDY022}\textsuperscript{ICLR} & 67.09 & 27.87 & 49.53 & 63.61 & 67.04 & 67.87 & 67.86 \\
& & Burst\tnote{*}~\cite{DBLP:conf/ijcai/Li022}\textsuperscript{IJCAI} & 80.69 & -- & -- & -- & 76.39 & 79.83 & 80.52 \\
\cmidrule{3-10}
& & \textbf{Ours (TPP) + QCFS} & 67.10 & \gc{46.88 (0.40)} & 64.77 (0.20) & 67.25 (0.12) & 67.74 (0.06) & 67.77 (0.05) & 67.79 (0.04) \\
\cmidrule{3-10}
& & \textbf{Ours (TPP)\tnote{*} + SNNC w/o Cali.} & 81.89 & 39.67 (0.99) & \gc{71.05 (0.68)} & \gc{78.97 (0.24)} & \gc{81.06 (0.05)} & \gc{81.61 (0.08)} & \gc{81.62 (0.05)} \\
\cmidrule{2-10}
& \multirow{12}{*}{VGG-16} 
& TSC\tnote{*}~\cite{han2020deep}\textsuperscript{ECCV} & 71.22 & -- & -- & -- & -- & -- & 69.86 \\
& & SNM\tnote{*}~\cite{wang2022signed}\textsuperscript{ICLR} & 74.13 & -- & -- & -- & 71.80 & 73.69 & 73.95 \\
& & SNNC-AP\tnote{*}~\cite{li2021free}\textsuperscript{ICML} & 77.89 & -- & -- & -- & 73.55 & 77.10 & \gc{77.86} \\
& & RTS\tnote{$\circ$}~\cite{DBLP:conf/iclr/DengG21}\textsuperscript{ICLR} & 76.13 & 23.76 & 43.81 & 56.23 & 67.61 & 73.45 & 75.23 \\
& & OPI\tnote{*}~\cite{bu2022optimized}\textsuperscript{AAAI} & 76.31 & -- & 60.49 & 70.72 & 74.82 & 75.97 & 76.25 \\
& & QCFS\tnote{+}~\cite{DBLP:conf/iclr/BuFDDY022}\textsuperscript{ICLR} & 76.21 & 69.29 & 73.89 & 75.98 & 76.53 & 76.54 & 76.60 \\
& & DDI~\cite{bojkovic2024data}\textsuperscript{AISTATS} & 70.44 & 51.21 & 53.65 & 57.12 & 61.61 & 70.44 & 73.82 \\
& & FTBC(+QCFS)~\cite{wu2024ftbc}\textsuperscript{ECCV} & 76.21 & 71.47 & 75.12 & 76.22 & 76.48 & 76.48 & 76.48 \\
\cmidrule{3-10}
& & \textbf{Ours (TPP) + RTS} & 76.13 & 37.88 (0.35) & 65.81 (0.27) & 73.05 (0.12) & 75.17 (0.17) & 75.64 (0.12) & 75.9 (0.08) \\
\cmidrule{3-10}
& & \textbf{Ours (TPP) + QCFS} & 76.21 & \gc{73.93 (0.22)} & \gc{76.03 (0.23)} & \gc{76.43 (0.07)} & 76.55 (0.03) & 76.55 (0.07) & 76.52 (0.04) \\
\cmidrule{3-10}
& & \textbf{Ours (TPP)\tnote{*} + SNNC w/o Cali.} & 77.87 & 59.23 (0.65) & 73.16 (0.17) & 76.05 (0.26) & \gc{77.16 (0.09)} & \gc{77.56 (0.13)} & 77.64 (0.04) \\
\bottomrule
\end{tabular}
\begin{tablenotes}
\item[] *: Without modification to ReLU of ANNs. +: Using authors' provided models and code. $\circ$: Self implemented.
\end{tablenotes}
\end{threeparttable}
}
\vspace{-15pt}
\end{table*}

\noindent \textbf{CIFAR dataset:} We further evaluate the performance of our methods on CIFAR-100 dataset and present the results in Table ~\ref{tab:ann-snn-imagenet-cifar}. We observe similar patterns as with the ImageNet. When comparing our method with ANN-SNN conversion methods which use non-\relu activations, e.g. QCFS and RTS, our method constantly outperforms RTS on ResNet-20 and VGG16. QCFS baseline suffers from necessity to train ANN models from scratch with custom activations, while our method is applicable to any ANN model with \relu-like activation. Furthermore, custom activation functions sometimes sacrifice the ANN performance as can be seen from the corresponding ANN accuracies. 

\noindent \textbf{CIFAR10-DVS dataset:} 
We evaluate our method on the event-based CIFAR10-DVS~\cite{li2017cifar10} dataset, comparing it with state-of-the-art direct training and ANN-SNN conversion methods (Table~\ref{tab:other-snn-training-methods}). Our approach demonstrates superior performance, achieving \gc{82.40\%} accuracy at just 8 timesteps and further improving to \gc{83.20\%} at 64 timesteps. Notably, our method outperforms the direct training method Spikformer~\cite{zhouSpikformerWhenSpiking2022} (80.90\%) and the ANN-SNN conversion method AdaFire~\cite{wang2024adaptive} (81.25\%).

\subsection{Comparison with other types of SNN training methods and models}


\begin{table}[!ht]
    \vspace{-5pt}
    \caption{Comparison of direct and hybrid training methods for SNNs on CIFAR-100, ImageNet and CIFAR10-DVS. Baseline results include the highest reported accuracy and corresponding latency.}
    \label{tab:other-snn-training-methods}
    \renewcommand\arraystretch{1.2}
    \centering
    \scalebox{0.65}{
    \begin{threeparttable}    
    \begin{tabular}{cccc}
        \toprule
        Method & Category & Timesteps & Accuracy \\
        \toprule
        \multicolumn{4}{c}{\textbf{VGG-16 [CIFAR-100]}} \\
        \midrule
        LM-H~\cite{hao2023progressive}\textsuperscript{ICLR} & Hybrid Training & 4 & 73.11 \\
        SEENN-II~\tnote{*}~\cite{DBLP:conf/nips/LiGKP23}\textsuperscript{NeurIPS} & Direct Training & 1.15\tnote{*} & 72.76 \\
        Dual-Phase~\cite{DBLP:journals/corr/abs-2205-07473} & Hybrid Training & 4 / 8 & 70.08 / 75.06 \\
        \textbf{Ours (TPP) + QCFS} & ANN-SNN & 4 / 8 & \gc{73.93 / 76.03} \\
        \midrule
        \multicolumn{4}{c}{\textbf{ResNet-20 [CIFAR-100]}} \\
        \midrule
        LM-H~\cite{hao2023progressive}\textsuperscript{ICLR} & Hybrid Training & 4 & 57.12 \\
        TTS~\cite{DBLP:conf/aaai/GuoCLPZHM24}\textsuperscript{AAAI} & Direct Training & 4 & 74.02 \\
        \textbf{Ours (TPP) + SNNC w/o Cali.} & ANN-SNN & 16 & \gc{78.97} \\             
        \midrule
        \multicolumn{4}{c}{\textbf{ResNet-34 [ImageNet]}} \\
        \midrule
        SEENN-I~\cite{DBLP:conf/nips/LiGKP23}\textsuperscript{NeurIPS} & Direct Training & 3.38~\tnote{*} & 64.66 \\
        RMP-Loss~\cite{DBLP:conf/iccv/GuoLCZPZHM23}\textsuperscript{ICCV} & Direct Training & 4 & 65.17 \\
        RecDis-SNN~\cite{guo2022recdis}\textsuperscript{CVPR} &  Direct Training & 6 & 67.33 \\
        SpikeConv~\cite{liu2022spikeconverter}\textsuperscript{AAAI} & Hybrid Training & 16 & 70.57 \\
        GAC-SNN~\cite{DBLP:conf/aaai/QiuZC0DL24}\textsuperscript{AAAI} & Direct Training & 6 & 70.42 \\
        TTS~\cite{DBLP:conf/aaai/GuoCLPZHM24}\textsuperscript{AAAI} & Direct Training & 4 & 70.74 \\
        SEENN-I~\cite{DBLP:conf/nips/LiGKP23}\textsuperscript{NeurIPS} & ANN-SNN & 29.53~\tnote{*} & 71.84 \\
        \textbf{Ours (TPP) + QCFS} & ANN-SNN & 16 & \gc{72.03} \\
        \textbf{Ours (TPP)+ SNNC w/o Cali.} & ANN-SNN & 32 & \gc{74.07} \\
        \midrule
        \multicolumn{4}{c}{\textbf{ResNet-18 [CIFAR10-DVS]}} \\
        \midrule 
        TA-SNN~\cite{yaoTemporalwiseAttentionSpiking2021}\textsuperscript{ICCV} & Direct Training & 10 & 72.00 \\
        PLIF~\cite{fangIncorporatingLearnableMembrane2021}\textsuperscript{ICCV} & Direct Training & 20 & 74.80 \\
        Dspkie~\cite{liDifferentiableSpikeRethinking2021}\textsuperscript{NeurIPS} & Direct Training & 10 & 75.40 \\
        DSR ~\cite{mengTrainingHighPerformanceLowLatency2022}\textsuperscript{CVPR} & Direct Training & 10 & 77.30 \\
        Spikformer ~\cite{zhouSpikformerWhenSpiking2022}\textsuperscript{ICLR} & Direct Training & 10 & 80.90 \\
        AdaFire ~\cite{wang2024adaptive}\textsuperscript{AAAI} & ANN-SNN & 8 & 81.25 \\
        \textbf{Ours (TPP) + SNNC w/o Cali.}  & ANN-SNN & \makecell{4 \\ \gc{8} \\ 16 \\ 32 \\ 64} & \makecell{77.90 \\ \gc{82.40} \\ 82.80 \\83.00 \\ 83.20} \\
        \bottomrule
    \end{tabular}
\begin{tablenotes}
\item[*] The average number of timesteps during inference on the test dataset.
\end{tablenotes}
\end{threeparttable}
}
\vspace{-10pt}
\end{table}
\raggedbottom 

We compare our approach with several state-of-the-art direct training and hybrid training methods as presented in Table~\ref{tab:other-snn-training-methods}. The comparison is founded on performance metrics like accuracy and the number of timesteps utilized during inference on the CIFAR-100 and ImageNet datasets. We benchmark our method against prominent approaches such as LM-H~\cite{hao2023progressive}, SEENN~\cite{DBLP:conf/nips/LiGKP23}, Dual-Phase~\cite{DBLP:journals/corr/abs-2205-07473}, TTS~\cite{DBLP:conf/aaai/GuoCLPZHM24}, RMP-Loss~\cite{DBLP:conf/iccv/GuoLCZPZHM23}, RecDis-SNN~\cite{guo2022recdis}, SpikeConv~\cite{liu2022spikeconverter}, and GAC-SNN~\cite{DBLP:conf/aaai/QiuZC0DL24}. We showcase the best accuracy comparable to state-of-the-art methods achieved by our approach with minimal timesteps. We prioritize accuracy, but direct training and hybrid training opt for a lower number of timesteps and sacrifice accuracy. We outperform LM-H~\cite{hao2023progressive} and Dual-Phase~\cite{DBLP:journals/corr/abs-2205-07473} for VGG-16 on CIFAR-100. For ResNet-20 on CIFAR-100, we have higher accuracy but longer timesteps. Additionally, for ResNet-34 on the ImageNet dataset, the accuracy of our method with QCFS with 16 timesteps is higher than that of SpikeConv~\cite{liu2022spikeconverter} with the same number of timesteps. We also achieve higher accuracy with longer timesteps as expected. Overall, our approach demonstrates promising performance and competitiveness in comparison with the existing SNN training methods.




\begin{figure*}[!ht]
\centering
\subfigure[Layer 4]{
    \includegraphics[width=0.32\linewidth]{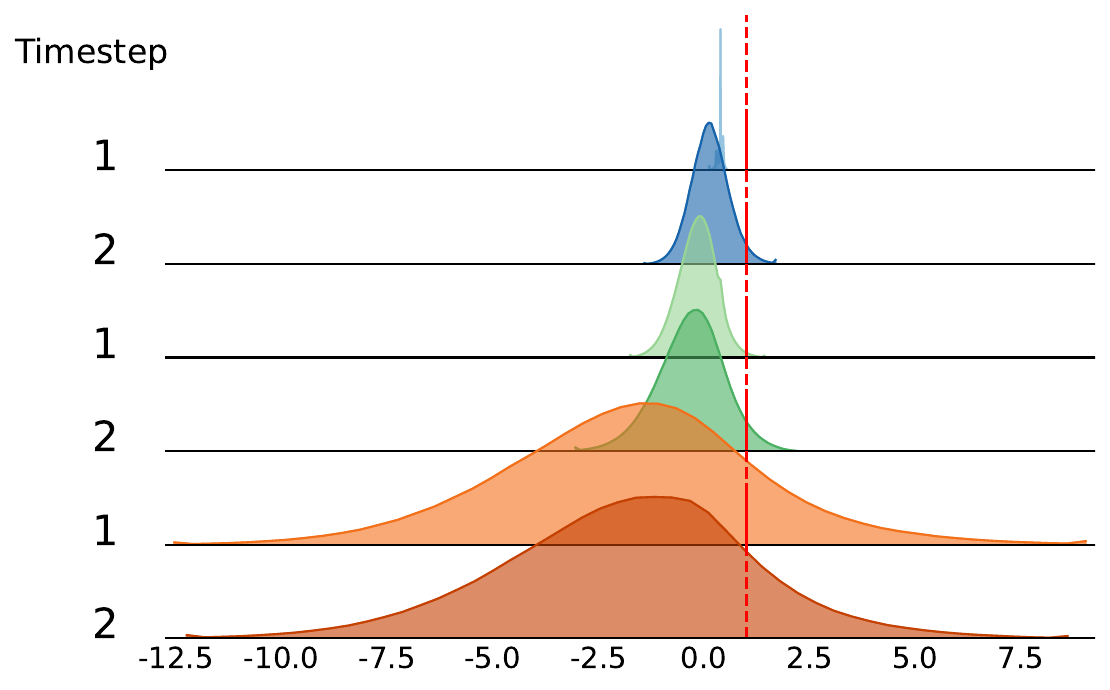}}
\subfigure[Layer 8]{
    \includegraphics[width=0.32\linewidth]{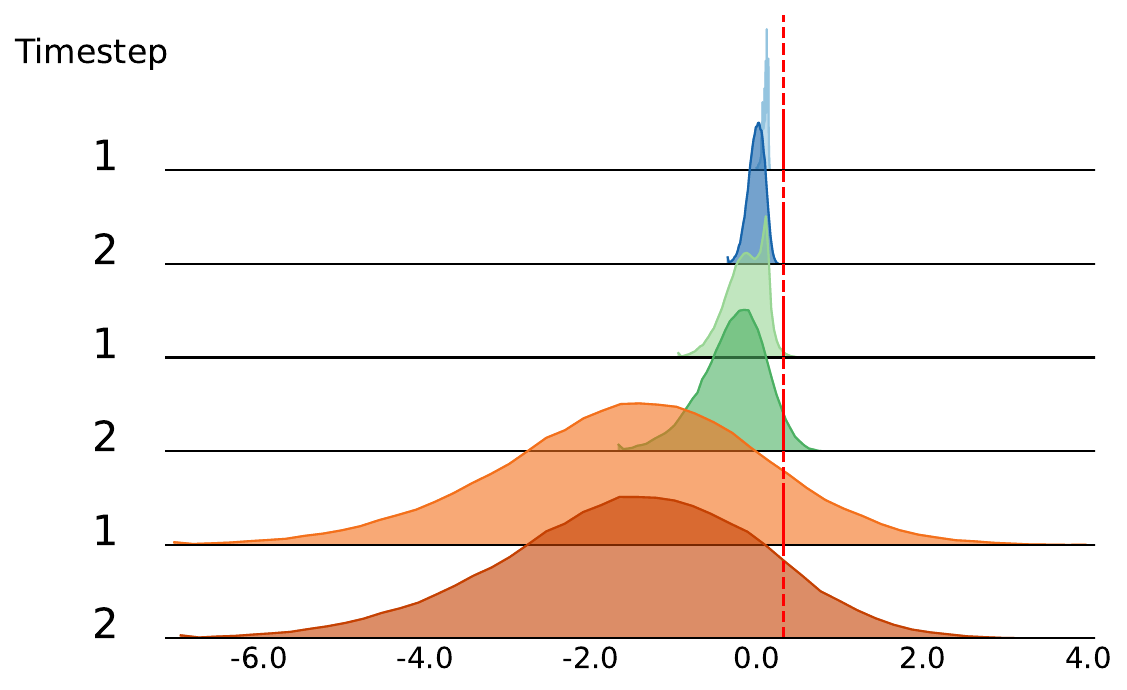}}
\subfigure[Layer 12]{
    \includegraphics[width=0.32\linewidth]{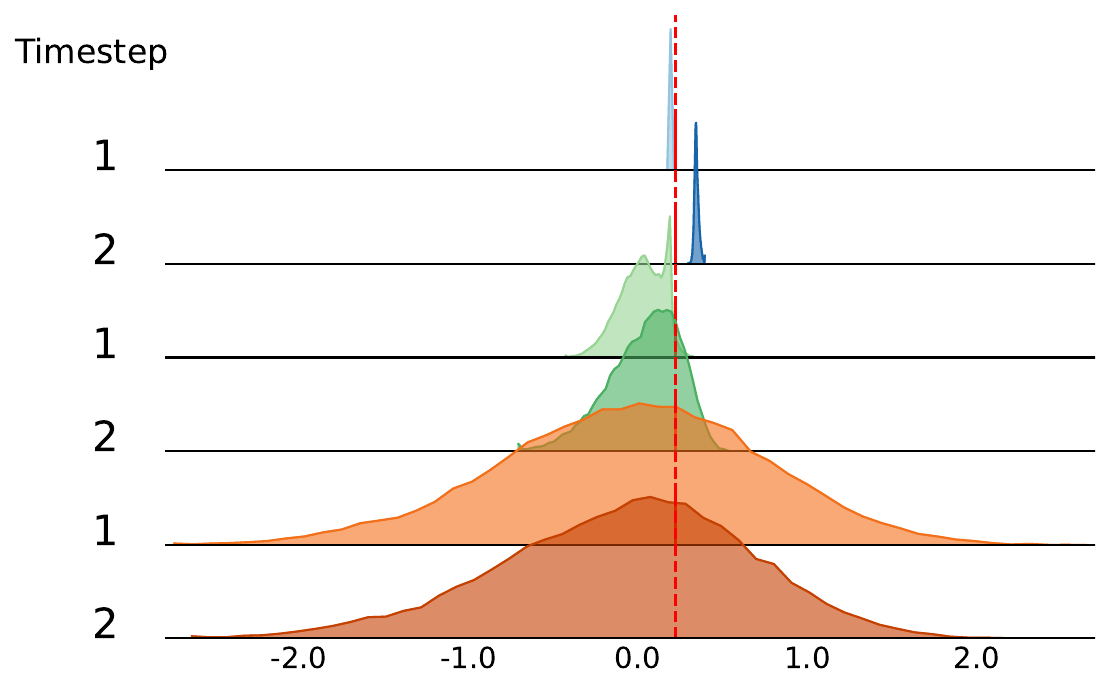}}    
\vskip -0.1in
\caption{The membrane potential distributions of the first channel (randomly selected) across three modes (baseline, shuffle, and probabilistic) in VGG-16 on CIFAR-100. For comparative analysis, the first two timesteps ($t=1$, $t=2$) from a total of eight timesteps ($T=8$) are selected for each mode. The baseline mode (blue) attains an accuracy of 24.22\%, while the shuffle mode (light green) enhances accuracy to 70.54\%, and the probabilistic mode (dark orange) further improves accuracy to 73.42\%. The distributions are presented prior to neuronal firing, with the red dashed line indicating the threshold voltage (Vth) for the respective layer (see Appendix~\ref{appendix:mem-pot-dist}).}
 \label{fig:mem-pot-distribution}
 \vspace{-5pt}
\end{figure*}

\subsection{Spiking activity}
The event driven nature of various neuromorphic chips implies that the energy consumption is directly proportional to the spiking activity, i.e., the number of spikes produced throughout the network: the energy is consumed in the presence of spikes. To this end, we tested our proposed method (TPP) for the spike activity and compared with the baselines. For a given model, we counted the average number of spikes produced after each layer, per sample, for both the baseline and our method. Figure~\ref{fig:spike-counts} shows the example of RTS and RTS + TPP.  Both the baseline and our method exhibit similar spike counts. In particular, our method constantly outperforms the baselines, and possibly in doing so it needs longer average latency per sample ($T$). However, the energy consumed is approximately the same as that for the baseline in time $T$. The complete tables are present in Appendix \ref{appendix:firing-counts}, where we provide more detailed picture of spike activities.
\begin{figure}[h]
\centering
\includegraphics[width=0.9\linewidth]{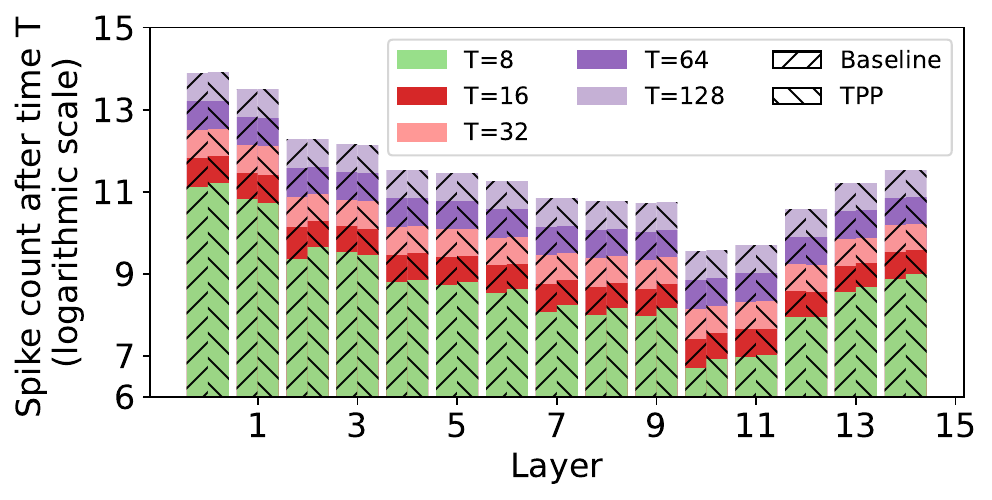}
\vskip -0.2in
\caption{\small Spike counts of VGG-16 on CIFAR-100 of RTS baseline compared with RTS+TPP. Note: The bar height from bottom indicates the spike counts after each timestep T, and the color of longer Ts is overlaid by shorter Ts (see Appendix \ref{appendix:firing-counts}) }
\label{fig:spike-counts}
\vspace{-10pt}
\end{figure}

\subsection{Membrane potential distribution in early time steps} 

In Figure \ref{fig:mem-pot-distribution} we compare the membrane potential distributions for baseline models and two methods that we studied in the paper, the permutations applied on spike trains and TPP method. Once again, it can be seen another reason for performance degradation of baseline models in low latency, as the membrane potential is not variable enough to produce spike informative spike trains, which is particularly visible in deeper layers. On the other side ``permuted'' and TPP models show sufficient variability throughout the layers. 

By increasing the latency, the baseline models can recover some of the variability and spike production, as can be seen in Figure \ref{fig:spike-counts}. But, due to the misplacement of spike trains through temporal dimension, they are still not able to pick up on the ANN performance.

\section{Conclusions and future work} 
\label{sec:conclusion}

This work identified the phenomenon of ``temporal misalignment'' in ANN-SNN conversion, where random spike rearrangement enhances performance. We introduced two-phase probabilistic (TPP) spiking neurons, designed to intrinsically perform the effect of spike permutations. We show biological plausibility of such neurons as well as the hardware friendlines of the underlying mechanisms. We demonstrate their effectiveness through exhaustive experiments on large scale datasets, showing their competing performance compared to SOTA ANN-SNN conversion and direct training methods. 

In the future work, we aim to study the effect of permutations and probabilistic spiking in combination with directly trained SNN models.

\section*{Impact Statements}

This paper presents work whose goal is to advance the field of Machine Learning. There are many potential societal consequences of our work, none which we feel must be specifically highlighted here.

\bibliography{main}
\bibliographystyle{icml2025}

\newpage
\appendix
\setcounter{theorem}{0} 

\onecolumn

\section{Conversion steps}\label{app conversion steps}


\paragraph{Copying ANN architecture and weights.} ANN-SNN conversion process starts with a pre-trained ANN model, whose weights (and biases) will be copied to an SNN model following the same architecture. In this process, one considers ANN models whose non-activation layers become linear during the inference. In particular, these include fully connected, convolutional, batch normalization and average pooling layers.  

\paragraph{Approximating ANN activation functions.} The second step of the process considers the activation layers and their activation functions in ANN. Here, the idea is to initialize the spiking neurons in the corresponding SNN layer in such a way that their average spiking rate approximates the values of the corresponding activation functions. For the \relu (or \relu-like such as quantized or thresholded \relu) activations, this process is rather well understood. The spiking neuron threshold is usually set to correspond to the maximum activation ANN channel or layerwise, or to be some percentile of it. If we denote by $f$ the ANN actiavtion, then ideally, after setting the thresholds, one would like to have
\begin{equation}\label{eq main approx}
f(\bv[T])\approx \frac{\theta}{T}\cdot\sum_{t=1}^T \bs[t].
\end{equation} 
 
 If we recall the equations for the IF neuron (equations \eqref{eq: dynamics} in the article)
 \begin{align}\label{eq dynamics help}
\bv^\lup[t] &= \bv^\lup[t-1] +\bW^\lup \theta^{(l-1)}\cdot \bs^{(l-1)}[t]-\theta^\lup \cdot\bs[t-1],\\
\bs^\lup[t] &= H(\bv^\lup[t]- \theta^\lup),
\end{align}
we see that the value with which we are comparing the membrane potential (threshold) is the same as the value with which we are scaling the output spikes. In particular, as soon as our membrane potential has reached $\theta$, it will produce the value $\theta$. This can be loosely described as, whatever the input is, the output will be approximately that value (or zero, if the input is negative), which is exactly what \relu does. 

\paragraph{Absorbing thresholds.} Finally, we notice that, once we produce a spike $\bs^\lup[t]$, the value $\theta^\lup\cdot \bs^\lup[t]$ will be sent to the next layer, and will further be weighted with weights $W^{(l+1)}$ and the bias $b^{(l+1)}$ will be applied. As we want SNNs to operate only using ones and zeros (to avoid multiplication due to energy efficiency), the values $\theta^\lup$ will be absorbed into $W^{(l+1)}$, i.e. $W^{(l+1)}\leftarrow \theta^\lup W^{(l+1)}$.

\section{Proof of the theoretical results}
\label{appendix-sec:conversion_error_analysis}
We prove the main theorem from the article, which we restate here. 
\mainthm*

\begin{proof}
    We start by rewriting equation \eqref{eq probabilistic spiking} as 
    \begin{align}
    \bs^\lup[t] &= B\left(\frac{1}{\theta^\lup\cdot (T-t+1)}\cdot\bv^\lup[t-1]\right)\\
    &= B\left(\frac{1}{\theta^\lup\cdot (T-t+1)}\cdot\left(T\cdot X^\lup-\theta^\lup\cdot\sum_{i=1}^{t-1}s^\lup[i]\right)\right)\\
    &= B\left(\frac{1}{T-t+1}\cdot\left(T\cdot \frac{X^\lup}{\theta^\lup}-\sum_{i=1}^{t-1}s^\lup[i]\right)\right),\label{eq last expr}
    \end{align}
    obtained by unrolling through time the expression for the membrane potential. 

    We next consider various settings of the theorem.
For the first three statements, we can assume that the vector $X^\lup$ is one dimensional, i.e. $X^\lup\in\mathbb{R}$. We start by first considering the situation where $X^\lup>\theta^\lup$, so that the output of the ANN is $\relu_{\theta^\lup}(X^\lup)=\theta^\lup$. In that case, we notice that the spiking neuron will fire at every time step $t=1,\dots, T$, because the bias for the Bernoulli random variable will always be bigger than or equal to 1, as follows from the previous equations. Similarly, if $X^\lup\leq0$, the bias will always be non-positive, and both the output of ANN and of SNN will be 0. Consequently, the first three statements follow directly for both of these cases.

Suppose now that $0\leq X^\lup<\theta^\lup$ (or, equivalently, $0\leq\frac{X^\lup}{\theta^\lup}<1$) and let $a$ be minimal non-negative integer such that $a\cdot\theta^\lup\leq T\cdot X^\lup<(a+1)\cdot \theta$, i.e. $a:=\lfloor \frac{T\cdot X^\lup}{\theta^\lup}\rfloor$. In particular, $0\leq a<T$. We proceed to prove statement $(b)$ above. We note that in general, $\sum_{i=1}^{t-1}s^\lup[i]\leq a+1$, because after at most $a+1$ spikes, the residue membrane potential would become negative. Also, due to the condition of the statement that the residue membrane potential is still non-negative, we may assume that $\sum_{i=1}^{t-1}s^\lup[i]\leq a$. Then, it becomes clear that the bias in the last equation \eqref{eq last expr} is a non-negative number smaller than 1, and it follows that 
\begin{equation}
    \mathbb{E}\left[s_i^{(l)}[t]\right] = \frac{1}{T-t+1}\cdot\left(T\cdot \frac{X^\lup}{\theta^\lup}-\sum_{i=1}^{t-1}s^\lup[i]\right),
\end{equation}
so that we finally have 
\begin{align}
    \frac{\theta^\lup\cdot (T-t+1)}{T}&\cdot\mathbb{E}\left[s_i^{(l)}[t]\right]+\frac{\theta^\lup}{T}\cdot\sum_{i=1}^{t-1} s_i^{(l)}[i]\\
    &=\frac{\theta^\lup\cdot (T-t+1)}{T}\cdot\frac{1}{T-t+1}\cdot\left(T\cdot \frac{X^\lup}{\theta^\lup}-\sum_{i=1}^{t-1}s^\lup[i]\right)+\frac{\theta^\lup}{T}\cdot\sum_{i=1}^{t-1} s_i^{(l)}[i]\\
    &= X^\lup = \relu_{\theta^\lup}(X^\lup).
\end{align}
Statement $(a)$ follows from $(b)$, while for the statement $(c)$ for the first case, we notice that after every spike, the residue membrane potential is a multiple of $\theta^\lup$, hence we will have exactly $a$ spikes (notation above), while the second case follows from statement $(b)$. Finally, statement $(d)$ is a direct consequence of the above discussion. 
\end{proof}



\section{Experiments Details}

\subsection{Datasets}

\noindent \textbf{CIFAR-10}: The CIFAR-10 dataset~\cite{krizhevsky2010cifar} contains 60,000 color images of 32x32 pixels each, divided into 10 distinct classes (e.g., airplanes, cars, birds), with each class containing 6,000 images. The dataset is split into 50,000 training images and 10,000 test images.

\noindent \textbf{CIFAR-100}: The CIFAR-100 dataset~\cite{krizhevsky2010cifar} consists of 60,000 color images of 32x32 pixels, distributed across 100 classes, with each class having 600 images. Similar to CIFAR-10, it is divided into 50,000 training images and 10,000 test images.


\noindent \textbf{CIFAR10-DVS}: The CIFAR10-DVS~\cite{li2017cifar10} dataset is a neuromorphic vision benchmark derived from the CIFAR-10 dataset, converting 10,000 static images (1,000 per class across 10 categories like airplanes and cars) into event streams using a Dynamic Vision Sensor (DVS). Generated via Repeated Closed-Loop Smooth (RCLS) movement to simulate realistic motion, the dataset captures spatio-temporal intensity changes as asynchronous ON/OFF events (128×128 resolution) with inherent noise (e.g., 60Hz LCD artifacts, later corrected). Designed for moderate complexity between MNIST-DVS and N-Caltech101, it challenges event-driven algorithms, achieving initial low accuracies (~22–29\%) with methods like spiking neural networks and SVM, highlighting its utility for advancing neuromorphic research. Publicly available on Figshare\footnote{\url{https://figshare.com/articles/dataset/CIFAR10-DVS_New/4724671}}, CIFAR10-DVS bridges traditional and event-based vision, fostering innovation in brain-inspired computing and real-world dynamic scene analysis.

\noindent \textbf{ImageNet}: The ImageNet dataset~\cite{deng2009imagenet} comprises 1,281,167 images spanning 1,000 classes in the training set, with a validation set and a test set containing 50,000 and 100,000 images, respectively. Unlike the CIFAR datasets, ImageNet images vary in size and resolution. The validation set is frequently used as the test set in various applications.

\subsection{Configuration and Setups}
\label{appendix:config}

\subsubsection{Ours + QCFS}

\noindent \textbf{CIFAR}: We followed the original paper's training configurations to train ResNet-20 and VGG-16 on CIFAR-100. The Stochastic Gradient Descent (SGD) optimizer with a momentum of 0.9 was used. The initial learning rate was set to 0.02, with a weight decay of $5 \times 10^{-4}$. A cosine decay scheduler adjusted the learning rate over 300 training epochs. The quantization steps $L$ were set to 8 for ResNet-20 and 4 for VGG-16. All models were trained for 300 epochs.

\noindent \textbf{ImageNet}: We utilized checkpoints for ResNet-34 and VGG-16 from the original paper's GitHub repository. For ImageNet, $L$ was set to 8 and 16 for ResNet-34 and VGG-16, respectively. 

\subsubsection{Ours + RTS}
\noindent \textbf{CIFAR}: We trained models using the recommended settings from the original paper.

\noindent \textbf{ImageNet}: We used pre-trained checkpoints for ResNet-34 and VGG-16 from the original paper's GitHub repository. Subsequently, all ReLU layers were replaced with spiking neuron layers.

For all datasets, we initialize TPP membrane potential to zero, while in the baselines we do as they propose.

\subsubsection{Ours + SNNC w/o Calibration}

\noindent \textbf{CIFAR}: We adhered to the original paper's configurations to train ResNet-20 and VGG-16 on CIFAR-100. The SGD optimizer with a momentum of 0.9 was used. The initial learning rate was set to 0.01, with a weight decay of $5 \times 10^{-4}$ for models with batch normalization. A cosine decay scheduler adjusted the learning rate over 300 training epochs. All models were trained for 300 epochs with a batch size of 128.

\noindent \textbf{ImageNet}: We used pre-trained checkpoints for ResNet-34 and VGG-16 from the original paper's GitHub repository. Subsequently, all ReLU layers were replaced with our proposed spiking neuron layers.

\clearpage

\section{Algorithms}
\label{appendix:algo}
The baseline SNN neuron forward function (Algorithm \ref{algo:snn-forward}) initializes the membrane potential to zero and iteratively updates it by adding the layer output at each timestep. Spikes are generated when the membrane potential exceeds a defined threshold, $\theta$, and the potential is reset accordingly. This function captures the core dynamics of spiking neurons. The Shuffle Mode (Algorithm \ref{algo:snn-shuffle-forward}) is an extension of the baseline forward function. After generating the spikes across the simulation length, this mode shuffles the spike train. 

The TPP Mode (Algorithm \ref{algo:snn-tpp-forward}) introduces a probabilistic component to the spike generation process. Instead of a deterministic threshold-based spike generation, it uses a Bernoulli process where the probability of spiking is determined by the current membrane potential relative to the threshold adjusted for the remaining timesteps. 


\begin{algorithm}[H]
\caption{SNN Neuron Forward Function and Additional Modes}
\label{algo:snn-forward}
\begin{algorithmic}[1]
\REQUIRE SNN Layer $\mathcal{\ell}$; Input tensor $\mathbf{x}$; Threshold $\theta$; Simulation length $T$.

\FUNCTION{\textsc{BaselineSNN}($\mathcal{\ell}, \mathbf{x}, \theta, T$)}
    \STATE $\mathbf{v} \gets 0$ \COMMENT{Initialize membrane potential}
    \FOR{$t = 1$ \textbf{to} $T$}
        \STATE $\mathbf{v} \gets \mathbf{v} + \mathcal{\ell}(\mathbf{x}(t))$
        \STATE $\mathbf{s} \gets (\mathbf{v} \geq \theta) \times \theta$
        \STATE $\mathbf{v} \gets \mathbf{v} - \mathbf{s}$
        \STATE Store $\mathbf{s}(t)$
    \ENDFOR
    \STATE \textbf{return} $\mathbf{s}$
\ENDFUNCTION

\end{algorithmic}
\end{algorithm}

\begin{algorithm}[H]
\caption{SNN Neuron Forward Function of Shuffle Mode}
\label{algo:snn-shuffle-forward}
\begin{algorithmic}[1]
\REQUIRE SNN Layer $\mathcal{\ell}$; Input tensor $\mathbf{x}$; Threshold $\theta$; Simulation length $T$.

\FUNCTION{\textsc{ShuffleMode}($\mathcal{\ell}, \mathbf{x}, \theta, T$)}
    \STATE $\mathbf{v} \gets 0$ \COMMENT{Initialize membrane potential}
    \FOR{$t = 1$ \textbf{to} $T$}
        \STATE $\mathbf{v} \gets \mathbf{v} + \mathcal{\ell}(\mathbf{x}(t))$
        \STATE $\mathbf{s} \gets (\mathbf{v} \geq \theta) \times \theta$
        \STATE $\mathbf{v} \gets \mathbf{v} - \mathbf{s}$
        \STATE Store $\mathbf{s}(t)$
    \ENDFOR
    \STATE Shuffle the stored spikes $\mathbf{s}(1), \mathbf{s}(2), \ldots, \mathbf{s}(T)$
    \STATE \textbf{return} shuffled $\mathbf{s}$
\ENDFUNCTION

\end{algorithmic}
\end{algorithm}

\begin{algorithm}[H]
\caption{SNN Neuron Forward Function of TPP Mode}
\label{algo:snn-tpp-forward}
\begin{algorithmic}[1]
\REQUIRE SNN Layer $\mathcal{\ell}$; Input tensor $\mathbf{x}$; Threshold $\theta$; Simulation length $T$.

\FUNCTION{\textsc{TPPMode}($\mathcal{\ell}, \mathbf{x}, \theta, T$)}
    \STATE $\mathbf{v} \gets \sum_{t=1}^{T} \mathbf{x}(t)$ \COMMENT{Initialize membrane potential with the sum of inputs}
    \FOR{$t = 1$ \textbf{to} $T$}
        \STATE $\mathbf{p} \gets \text{Clamp}(\mathbf{v} / (\theta \times (T - t + 1)), 0, 1)$
        \STATE $\mathbf{s} \gets \text{Bernoulli}(\mathbf{p}) \times \theta$
        \STATE $\mathbf{v} \gets \mathbf{v} - \mathbf{s}$
        \STATE Store $\mathbf{s}(t)$
    \ENDFOR
    \STATE \textbf{return} $\mathbf{s}$
\ENDFUNCTION
\end{algorithmic}
\end{algorithm}

\section{Additional Experiments}

\subsection{SNNC} 

We show extra experiment results about the comparison among permutation method and two-phase probabilistic method. We validated ResNet-20 and VGG-16 on the CIFAR-10/100 dataset , and ResNet-34, VGG-16 and RegNetX-4GF on ImageNet with batch and channel-wise normalization enabled. Using a batch size of 128, the experiment was run five times with different random seeds to ensure reliable and reproducible results.

\begin{table}[H]
\caption{Comparison between our proposed methods and ANN-SNN conversion SNNC method on \textbf{CIFAR-10}. The average accuracy and standard deviation of the TPP method are reported over 5 experiments.}
\label{tab:ann-snn-snnc-cifar10}
\centering
\scalebox{0.8}
{
\begin{threeparttable}
\begin{tabular}{@{}cccccccccc@{}}
\toprule
Architecture & Method & ANN & T=1 & T=2 & T=4 & T=8 & T=16 & T=32 & T=64 \\ 
\toprule
\multirow{3}{*}{ResNet-20}
& SNNC-AP~\cite{li2021free} & 96.95 & 51.20 & 66.07 & 83.60 & 92.79 & 95.62 & 96.58 & 96.85 \\
& \textbf{Ours (Permute)} & 96.95 & 34.05 & 61.46 & 90.54 & 95.05 & 96.12 & 96.62 & 96.77 \\
& \textbf{Ours (TPP)} & 96.95 & 10.05 (0.02) & 17.30 (0.52)	& 79.19 (0.67) & 93.72 (0.05) & 95.87 (0.09) & 96.67 (0.04) & 96.80 (0.01) \\
\midrule
\multirow{4}{*}{VGG-16}
& SNNC-AP~\cite{li2021free} & 95.69 & 60.72 & 75.82 & 82.18 & 91.93 & 93.27 & 94.97 & 95.40 \\
& \textbf{Ours (Permute)} & 95.69 & 38.01 & 64.40 & 84.65 & 92.24 & 92.80 & 93.33 & 94.10 \\
& \textbf{Ours (TPP)} & 95.69 & 11.46 (0.35) & 32.24 (1.40) & 86.85 (0.42) & 94.34 (0.12) & 94.86 (0.06) & 95.48 (0.03) & 95.60 (0.04) \\
\bottomrule
\end{tabular}
\end{threeparttable}
}
\end{table}


\begin{table}[H]
\caption{Comparison between our proposed methods and ANN-SNN conversion SNNC method on \textbf{CIFAR-100}. The average accuracy and standard deviation of the TPP method are reported over 5 experiments.}
\label{tab:ann-snn-snnc-cifar100}
\centering
\scalebox{0.8}
{
\begin{threeparttable}
\begin{tabular}{@{}cccccccccc@{}}
\toprule
Architecture & Method & ANN & T=1 & T=2 & T=4 & T=8 & T=16 & T=32 & T=64 \\ 
\toprule
\multirow{3}{*}{ResNet-20}
& SNNC-AP~\cite{li2021free} & 81.89 & 17.91 & 34.08 &	54.78 & 72.28 & 78.57 & 81.20 & 81.95 \\
& \textbf{Ours (Permute)} & 81.89 &	5.64 & 19.54 & 52.46 & 75.21 & 79.76 & 81.12 & 81.52 \\
& \textbf{Ours (TPP)} & 81.89 &	1.94 (0.11) & 5.15 (0.44) & 39.67 (0.99) & 71.05 (0.68) & 78.97 (0.24) & 81.06 (0.05) & 81.61 (0.08) \\
\midrule
\multirow{4}{*}{VGG-16}
& SNNC-AP~\cite{li2021free} & 77.87 & 28.64 & 34.87 & 50.95 & 64.30 & 71.93 & 75.39 & 77.05 \\
& \textbf{Ours (Permute)} & 77.87 & 12.50 & 34.98 & 60.81 & 69.42 & 72.78 & 73.50 & 75.14 \\
& \textbf{Ours (TPP)} & 77.87 & 2.05 (0.27) & 15.90 (0.71) & 59.23 (0.65) & 73.16 (0.17) & 76.05 (0.26) & 77.16 (0.09) & 77.56 (0.13) \\
\bottomrule
\end{tabular}
\end{threeparttable}
}
\end{table}


\begin{table}[H]
\caption{Comparison between our proposed methods and ANN-SNN conversion SNNC method on ImageNet. The average accuracy and standard deviation of the TPP method are reported over 5 experiments.}
\label{tab:ann-snn-snnc-imagenet}
\centering
\scalebox{0.8}
{
\begin{threeparttable}
\begin{tabular}{@{}cccccccccc@{}}
\toprule
Architecture & Method & ANN & T=4 & T=8 & T=16 & T=32 & T=64 & T=128 \\ 
\toprule
\multirow{3}{*}{ResNet-34}
& SNNC-AP~\cite{li2021free} & 75.65 & -- & -- &	-- & 64.54 & 71.12 & 73.45 \\
& \textbf{Ours (Permute)} & 75.65 &	10.51 & 57.57 & 70.94 & 74.00 & 75.06 & 75.47 \\
& \textbf{Ours (TPP)} & 75.65 & 2.69 (0.03) & 49.24 (0.23) & 69.97 (0.10) & 74.07 (0.06) & 75.23 (0.03) & 75.51 (0.05) \\
\midrule
\multirow{4}{*}{VGG-16}
& SNNC-AP~\cite{li2021free} & 75.37 & -- & -- & -- & 63.64 & 70.69 & 73.32 \\
& \textbf{Ours (Permute)} & 75.37 & 38.61 & 67.29 & 73.35 & 74.34 & 74.82 & 75.11 \\
& \textbf{Ours (TPP)} & 75.37 & 54.14 (0.59) & 69.75 (0.27) & 73.44 (0.02) & 74.72 (0.06) & 75.14 (0.02) & 75.25 (0.03) \\
\midrule
\multirow{4}{*}{RegNetX-4GF}
& SNNC-AP~\cite{li2021free} & 80.02 & -- & -- & -- & 55.70 & 70.96 & 75.78 \\
& \textbf{Ours (Permute)} & 78.45 & -- & -- & 43.45 & 68.12 & 75.63 & 77.63 \\
& \textbf{Ours (TPP)} & 78.45 & -- & -- & 22.71 (2.98) & 66.51 (0.44) & 75.54 (0.07) & 77.83 (0.04) \\
\bottomrule
\end{tabular}
\end{threeparttable}
}
\end{table}

\subsection{RTS} 

\begin{table}[H]
\caption{Comparison between our proposed methods and ANN-SNN conversion RTS method on CIFAR-10/100 and ImageNet. The average accuracy and standard deviation of the TPP method are reported over 5 experiments.}
\label{tab:ann-snn-rts-cifar10}
\centering
\scalebox{0.75}
{
\begin{threeparttable}
\begin{tabular}{@{}ccccccccccc@{}}
\toprule
\textbf{Dataset} & \textbf{Architecture} & \textbf{Method} & \textbf{ANN} & \textbf{T=4} & \textbf{T=8} & \textbf{T=16} & \textbf{T=32} & \textbf{T=64} & \textbf{T=128} \\ 
\toprule
\multirow{6}{*}{CIFAR-10}
& \multirow{3}{*}{VGG-16} 
& RTS\tnote{*}~\cite{DBLP:conf/iclr/DengG21} & 94.99 & 88.64 & 91.67 & 93.64 & 94.50 & 94.76 & 94.91 \\
& & \textbf{Ours (Permute)} & 94.99 & 91.22 & 93.70 & 94.50 & 94.86 & 94.88 & 94.97 \\
& & \textbf{Ours (TPP)} & 94.99 & 91.49 (0.21) & 94.11 (0.09) & 94.72 (0.08) & 94.84 (0.06) & 94.91 (0.02) & 94.98 (0.02) \\
\cline{2-10}
& \multirow{3}{*}{ResNet-20}
& RTS\tnote{*}~\cite{DBLP:conf/iclr/DengG21} & 91.07 & 27.08 & 40.88 & 65.13 & 84.75 & 90.12 & 90.76 \\
& & \textbf{Ours (Permute)} & 91.07 & 68.18 & 86.57 & 90.20 & 90.81 & 91.04 & 90.99 \\
& & \textbf{Ours (TPP)} & 91.07 & 72.87 (0.22) & 88.27 (0.14) & 90.44 (0.08) & 90.86 (0.14) & 90.94 (0.04) & 91.01 (0.03) \\
\cline{1-10}
\multirow{3}{*}{CIFAR-100} 
& \multirow{3}{*}{VGG-16}
& RTS\tnote{$\circ$}~\cite{DBLP:conf/iclr/DengG21} & 76.13 & 23.76 & 43.81 & 56.23 & 67.61 & 73.45 & 75.23 \\
& & \textbf{Ours (Permute)} & 76.13 & 35.31 & 62.84 & 71.20 & 74.34 & 75.53 & 75.92 \\
& & \textbf{Ours (TPP) + RTS} & 76.13 & 37.88 (0.35) & 65.81 (0.27) & 73.05 (0.12) & 75.17 (0.17) & 75.64 (0.12) & 75.90 (0.08) \\
\cline{1-10}
\multirow{3}{*}{ImageNet} 
& \multirow{3}{*}{VGG-16}
& RTS~\cite{DBLP:conf/iclr/DengG21} & 72.16 & -- & -- & 55.80 & 67.73 & 70.97 & 71.89 \\
& & \textbf{Ours (Permute)} & 72.16 & 33.77 & 58.31 & 67.80 & 70.89 & 71.65 & 71.95 \\
& & \textbf{Ours (TPP)} & 72.16 & 30.50 (1.19) & 56.69(0.67) & 67.34 (0.25) & 70.63 (0.11) & 71.75 (0.05) & 72.05 (0.03) \\
\bottomrule
\end{tabular}
\end{threeparttable}
}
\end{table}

\subsection{QCFS} 

\begin{table}[H]
\caption{Comparison between our proposed methods and ANN-SNN conversion QCFS method on CIFAR-10/100 and ImageNet. The average accuracy and standard deviation of the TPP method are reported over 5 experiments.}
\label{tab:ann-snn-qcfs-cifar10}
\centering
\scalebox{0.8}
{
\begin{threeparttable}
\begin{tabular}{@{}ccccccccccc@{}}
\toprule
\textbf{Dataset} & \textbf{Architecture} & \textbf{Method} & \textbf{ANN} & \textbf{T=4} & \textbf{T=8} & \textbf{T=16} & \textbf{T=32} & \textbf{T=64} \\ 
\toprule
\multirow{6}{*}{CIFAR-10}
& \multirow{3}{*}{VGG-16} 
& QCFS\tnote{*}~\cite{bu2022optimal} & 95.76 & 94.33& 95.21 & 95.65 &95.87  &95.99  \\
& & \textbf{Ours (Permute)}& 95.76 & 95.15& 95.58 & 95.83 & 95.95 & 95.97 &\\
& & \textbf{Ours (TPP)} & 95.76 & 95.28(0.09) & 95.84(0.1) & 95.95(0.05) & 95.98(0.06) & 95.97 (0.03) \\
\cline{2-10}
& \multirow{3}{*}{ResNet-20}
& QCFS~\cite{bu2022optimal}& 92.43 &79.45 & 88.56 & 91.94 & 92.79 & 92.82 \\
& & \textbf{Ours (Permute)}&92.43  &84.85 &91.24  & 92.67 & 92.82 & 92.85 \\
& & \textbf{Ours (TPP)}& 92.43 & 86.24(0.18)&92.08(0.11)  &92.70(0.1)  &92.78(0.04  &92.68(0.06) \\
\cline{1-10}
\multirow{6}{*}{CIFAR-100} 
& \multirow{3}{*}{VGG-16}
& QCFS\tnote{$\circ$}~\cite{bu2022optimal} &76.3  &69.29 & 73.89 & 75.98 & 76.52 & 76.54 \\
& & \textbf{Ours (Permute)} & 76.3 &74.28 & 75.97 &76.54  &76.60  &76.64 \\
& & \textbf{Ours (TPP) } & 76.3 &74.0(0.15) & 76.06(0.08) &76.37(0.1)  &76.55(0.09)  &76.51(0.07) \\
\cline{2-10}
& \multirow{3}{*}{ResNet-20}
& QCFS~\cite{bu2022optimal}&67.0  &27.44 & 49.35 & 63.12 & 66.84 & 67.77 \\
& & \textbf{Ours (Permute)}&67.0  & 45.33&62.81  &66.93  &67.85  & 67.96 \\
& & \textbf{Ours (TPP)}& 67.0 & 47.0(0.2)& 64.66(0.25) & 67.28(0.12) & 67.61(0.1) & 67.77(0.06) \\
\cline{1-10}
\multirow{3}{*}{ImageNet} 
& \multirow{3}{*}{VGG-16}
& QCFS~\cite{bu2022optimal} & 74.29 & -- & -- & 50.97 & 68.47 & 72.85 \\
& & \textbf{Ours (Permute)} & 73.89 & 55.54 & 71.12 & 73.65 & 74.28 & 74.28 \\
& & \textbf{Ours (TPP)} & 74.22 & 68.39 (0.08) & 72.99 (0.05) & 73.98 (0.07) & 74.23 (0.03) & 74.29 (0.00) \\
\bottomrule
\end{tabular}
\end{threeparttable}
}
\end{table}

\clearpage

\subsection{Spiking activity}
\label{appendix:firing-counts}

The percentage difference between the baseline and our method in TPP mode is calculated as follows: $\text{Percentage Difference} = \frac{\text{Ours} - \text{Baseline}}{\text{Baseline}} \times 100 $.

\begin{figure*}[!ht]
\centering
\subfigure[RTS]{
    \includegraphics[width=0.32\linewidth]{figures/vgg16-cifar100-rts.pdf}}
\subfigure[QCFS]{
    \includegraphics[width=0.32\linewidth]{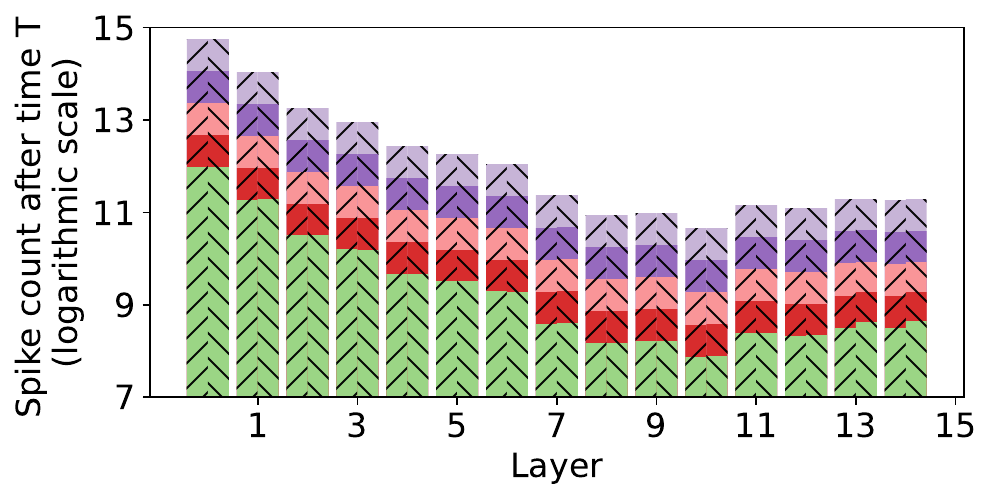}}
\subfigure[SNNC]{
    \includegraphics[width=0.32\linewidth]{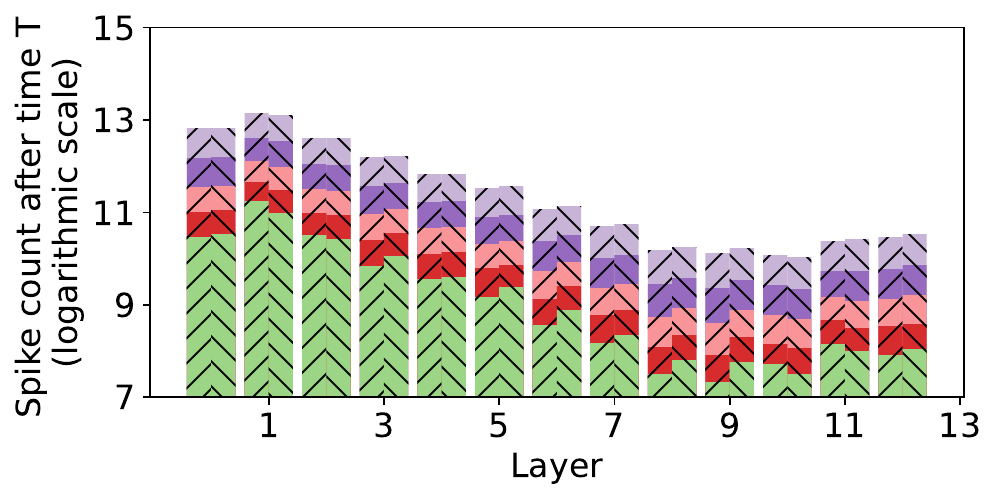}}    
\caption{Spike counts of VGG-16 on CIFAR-100 after different timesteps (T). Note: The bar height from bottom indicates the spike counts after each timestep T, and the color of longer Ts is overlaid by shorter Ts.}
 \label{fig:spike-counts-all}
\end{figure*}

\begin{table}[!htbp]
\caption{Comparison of firing counts percentage difference between the baseline and our proposed TPP method for VGG-16 on CIFAR-100 using QCFS.}
\label{tab:spike-counts-cifar100-vgg16-cifar100}
\centering
\scalebox{0.85}
{
\begin{threeparttable}
\begin{tabular}{@{}ccccccc@{}}
\toprule
Layer & T=4 & T=8 & T=16 & T=32 & T=64 & T=128 \\ 
\toprule
1 & 1.073 & 0.528 & 0.261 & 0.136 & 0.065 & 0.033 \\
\midrule
2 & 2.629 & 1.022 & 0.438 & 0.206 & 0.102 & 0.050 \\
\midrule
3 & 0.049 & 0.230 & 0.185 & 0.109 & 0.056 & 0.028 \\
\midrule
4 & -0.867 & -0.664 & -0.419 & -0.228 & -0.118 & -0.060 \\
\midrule
5 & 0.073 & 0.515 & 0.350 & 0.182 & 0.090 & 0.044 \\
\midrule
6 & 0.701 & 0.010 & -0.098 & -0.074 & -0.041 & -0.021 \\
\midrule
7 & -1.071 & -0.865 & -0.470 & -0.246 & -0.122 & -0.063 \\
\midrule
8 & 1.009 & 1.193 & 0.731 & 0.385 & 0.196 & 0.096 \\
\midrule
9 & 0.504 & 0.417 & 0.205 & 0.108 & 0.051 & 0.024 \\
\midrule
10 & -0.112 & 0.842 & 0.647 & 0.375 & 0.198 & 0.100 \\
\midrule
11 & 2.071 & 2.438 & 1.614 & 0.898 & 0.465 & 0.235 \\
\midrule
12 & 0.797 & 0.943 & 0.756 & 0.461 & 0.247 & 0.127 \\
\midrule
13 & 4.503 & 2.156 & 1.209 & 0.655 & 0.343 & 0.171 \\
\midrule
14 & 25.898 & 13.883 & 7.770 & 3.852 & 1.887 & 0.936 \\
\midrule
15 & 33.585 & 16.864 & 8.945 & 4.474 & 2.227 & 1.108 \\
\bottomrule
\end{tabular}
\end{threeparttable}
}
\end{table}

\begin{table}[!htbp]
\caption{Comparison of firing counts percentage difference between the baseline and our proposed TPP method for ResNet-34 on ImageNet using QCFS.}
\label{tab:spike-counts-cifar100-resnet34-imagenet}
\centering
\scalebox{0.85}
{
\begin{threeparttable}
\begin{tabular}{@{}ccccccc@{}}
\toprule
Layer & T=4 & T=8 & T=16 & T=32 & T=64 & T=128 \\ 
\toprule
1 & 0.587 & 0.306 & 0.149 & 0.079 & 0.036 & 0.018 \\
\midrule
2 & -0.921 & -0.435 & -0.212 & -0.108 & -0.053 & -0.025 \\
\midrule
3 & 0.353 & 0.189 & 0.082 & 0.036 & 0.019 & 0.010 \\
\midrule
4 & -2.786 & -1.583 & -0.920 & -0.506 & -0.270 & -0.141 \\
\midrule
5 & 0.469 & 0.277 & -0.107 & -0.020 & -0.019 & -0.011 \\
\midrule
6 & -3.955 & -1.865 & -0.705 & -0.344 & -0.166 & -0.086 \\
\midrule
7 & -0.381 & 0.321 & -0.090 & -0.031 & -0.020 & -0.013 \\
\midrule
8 & 6.615 & 3.261 & 1.494 & 0.628 & 0.290 & 0.131 \\
\midrule
9 & -5.116 & -3.006 & -1.555 & -0.794 & -0.391 & -0.195 \\
\midrule
10 & -2.938 & 3.431 & 3.096 & 1.794 & 0.975 & 0.498 \\
\midrule
11 & 1.184 & 0.466 & 0.359 & 0.102 & 0.053 & 0.022 \\
\midrule
12 & -17.739 & -7.302 & -1.788 & -0.609 & -0.270 & -0.132 \\
\midrule
13 & 0.105 & -0.138 & -0.287 & -0.292 & -0.166 & -0.087 \\
\midrule
14 & -8.597 & -2.626 & 0.006 & 0.327 & 0.289 & 0.140 \\
\midrule
15 & -0.522 & -0.214 & -0.273 & -0.299 & -0.173 & -0.094 \\
\midrule
16 & -11.196 & -5.194 & -1.990 & -0.813 & -0.405 & -0.217 \\
\midrule
17 & -3.828 & -1.192 & -0.320 & -0.192 & -0.105 & -0.058 \\
\midrule
18 & -6.869 & -2.392 & -0.644 & 0.007 & -0.002 & 0.001 \\
\midrule
19 & 0.092 & -0.299 & -0.181 & -0.138 & -0.074 & -0.035 \\
\midrule
20 & -5.639 & -0.308 & 0.923 & 0.796 & 0.448 & 0.234 \\
\midrule
21 & 0.399 & -0.968 & -0.796 & -0.509 & -0.275 & -0.145 \\
\midrule
22 & -4.474 & 3.712 & 4.440 & 3.033 & 1.700 & 0.880 \\
\midrule
23 & 0.456 & -0.901 & -0.703 & -0.533 & -0.281 & -0.145 \\
\midrule
24 & -5.863 & 4.241 & 5.617 & 3.797 & 2.090 & 1.074 \\
\midrule
25 & 1.433 & -0.464 & -0.774 & -0.632 & -0.347 & -0.182 \\
\midrule
26 & -5.034 & 4.908 & 6.328 & 4.362 & 2.459 & 1.271 \\
\midrule
27 & 0.661 & -0.914 & -1.156 & -0.931 & -0.530 & -0.284 \\
\midrule
28 & -15.667 & 4.763 & 9.616 & 6.975 & 4.062 & 2.096 \\
\midrule
29 & -9.747 & 1.663 & 3.836 & 2.455 & 1.384 & 0.673 \\
\midrule
30 & -0.151 & 16.639 & 15.387 & 9.638 & 5.334 & 2.769 \\
\midrule
31 & -5.403 & 0.917 & 1.957 & 1.555 & 1.009 & 0.574 \\
\midrule
32 & 17.796 & 6.777 & 3.728 & 3.231 & 2.507 & 1.583 \\
\midrule
33 & -4.935 & -2.141 & 2.055 & 2.931 & 2.395 & 1.561 \\
\bottomrule
\end{tabular}
\end{threeparttable}
}
\end{table}

\newpage
\clearpage

\begin{table}[!htbp]
\caption{Comparison of firing counts percentage difference between the baseline and our proposed TPP method for VGG-16 on ImageNet using QCFS.}
\label{tab:spike-counts-cifar100-vgg16-imagenet}
\centering
\scalebox{0.85}
{
\begin{threeparttable}
\begin{tabular}{@{}ccccccc@{}}
\toprule
Layer & T=4 & T=8 & T=16 & T=32 & T=64 & T=128 \\ 
\toprule
1 & 5.487 & 2.776 & 1.444 & 0.712 & 0.363 & 0.179 \\
\midrule
2 & 0.418 & 0.173 & -0.005 & 0.007 & 0.007 & 0.006 \\
\midrule
3 & -2.375 & -0.883 & -0.351 & -0.128 & -0.062 & -0.031 \\
\midrule
4 & 6.170 & 2.181 & 0.627 & 0.121 & 0.024 & -0.002 \\
\midrule
5 & -3.338 & -0.318 & 0.327 & 0.306 & 0.173 & 0.097 \\
\midrule
6 & 7.036 & 2.769 & 0.993 & 0.385 & 0.173 & 0.078 \\
\midrule
7 & -5.722 & -3.482 & -1.661 & -0.800 & -0.400 & -0.200 \\
\midrule
8 & -6.155 & 0.310 & 1.411 & 0.955 & 0.507 & 0.269 \\
\midrule
9 & -0.718 & 1.172 & 0.725 & 0.337 & 0.162 & 0.081 \\
\midrule
10 & -12.833 & -9.060 & -4.882 & -2.359 & -1.145 & -0.564 \\
\midrule
11 & 12.966 & 11.241 & 7.718 & 4.443 & 2.344 & 1.188 \\
\midrule
12 & -11.194 & -14.874 & -12.032 & -7.889 & -4.437 & -2.395 \\
\midrule
13 & -37.388 & -30.782 & -20.701 & -12.296 & -6.527 & -3.377 \\
\midrule
14 & -23.619 & -12.312 & -3.929 & -0.233 & 0.585 & 0.382 \\
\midrule
15 & -10.988 & -18.476 & -13.953 & -7.904 & -4.091 & -2.015 \\
\bottomrule
\end{tabular}
\end{threeparttable}
}
\end{table}



\newpage
\section{Membrane potential Distribution}
\label{appendix:mem-pot-dist}

\begin{figure}[!htbp]
\centering
\includegraphics[width=\textwidth]{./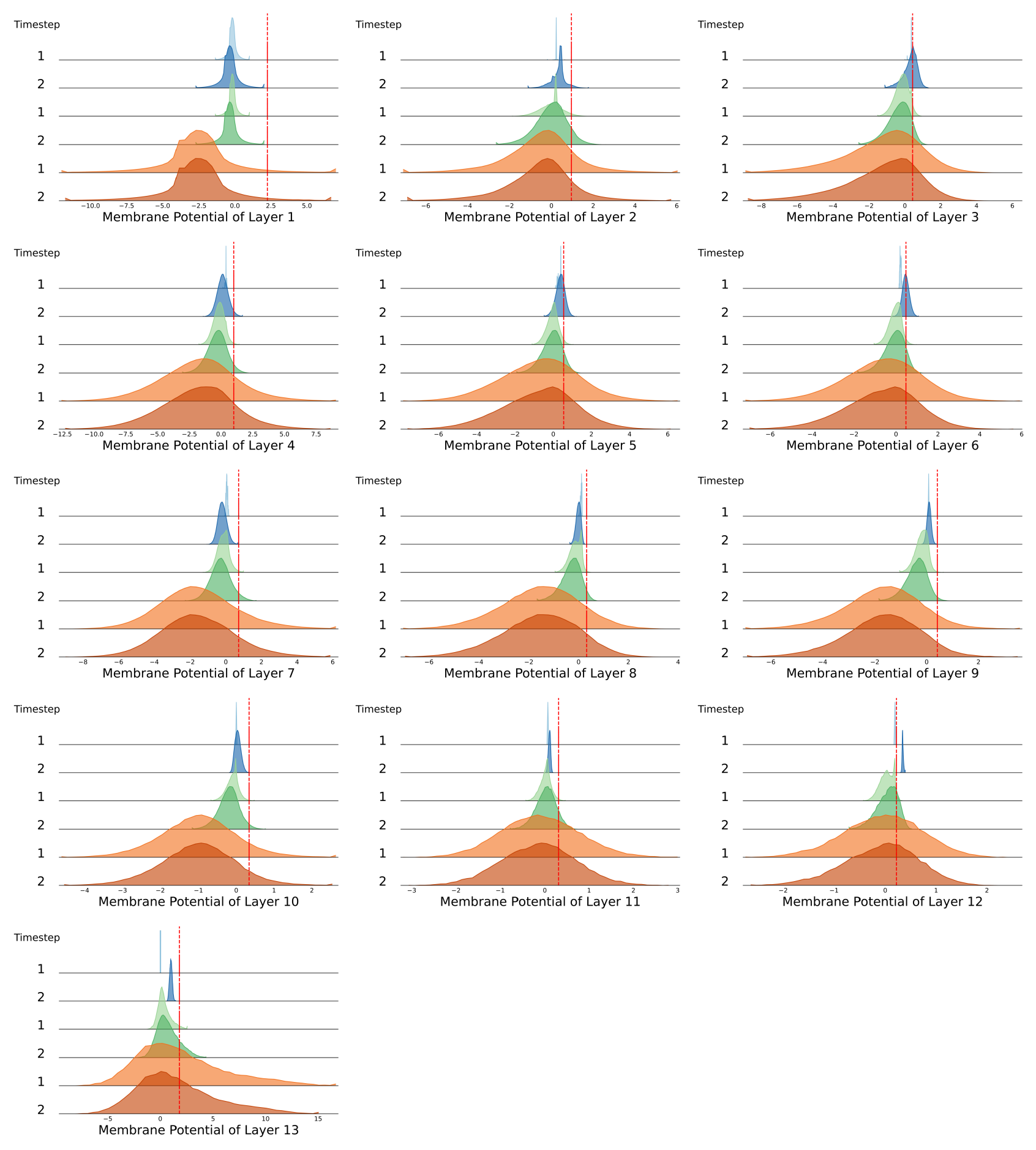}
\caption{The membrane potential distributions of the first channel (randomly selected) across three modes (baseline, shuffle, and probabilistic) in VGG-16 on CIFAR-100. For comparison, the first two timesteps (t=1, t=2) from a total of eight timesteps (T=8) are selected for each mode. The baseline mode (blue) achieves an accuracy of 24.22\%, while the shuffle mode (light green) improves accuracy to 70.54\%, and the probabilistic mode (dark orange) further increases accuracy to 73.42\%. The distributions are shown before firing, and the red dashed line indicates the threshold voltage (Vth) for the layer.}
\label{fig-original-prob-membrane-pot}
\end{figure}

\newpage
\clearpage

\begin{figure}[!htbp]
\centering
\includegraphics[width=\textwidth]{./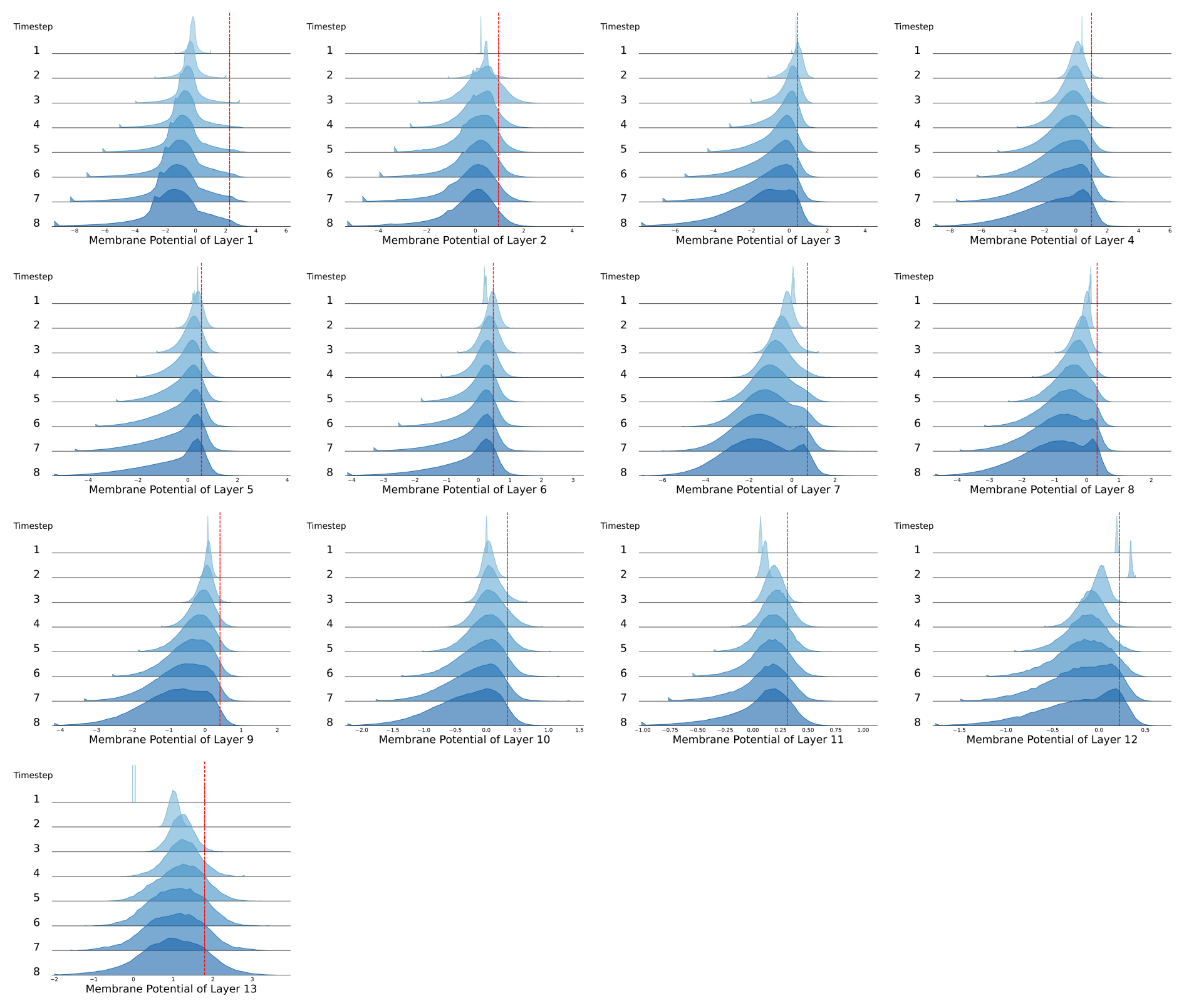}
\caption{The membrane potential of the first channel (randomly selected) from layer 1 in SNNC baseline mode using VGG-16 on CIFAR-100 achieves an accuracy of 24.22\% before firing. }
\label{fig-baseline-membrane-pot}
\end{figure}

The first two timesteps exhibit an abnormal distribution compared to those at t=4 to t=8. This discrepancy arises from the initially incorrect membrane potential before firing, which affects the firing rate and propagates errors layer by layer. A detailed quantifiable error analysis is provided in Appendix Section~\ref{appendix-sec:conversion_error_analysis}. Furthermore, as shown in Figure~\ref{fig-shuffle-membrane-pot}, shuffling the membrane potential effectively alleviates this effect.

\begin{figure}[!htbp]
\centering
\includegraphics[width=\textwidth]{./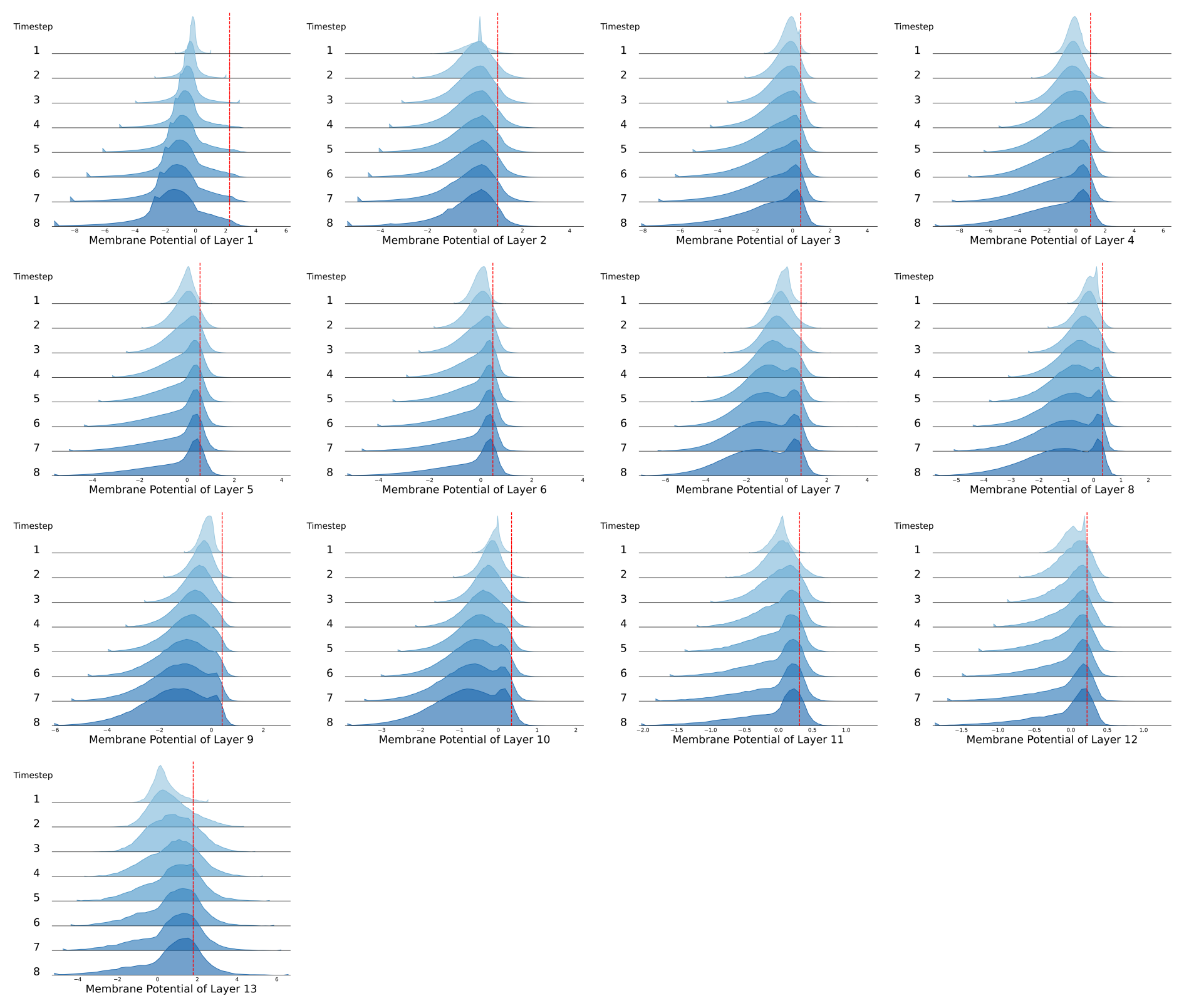}
\caption{Membrane potential of the first channel (randomly selected) before firing in SNNC shuffle mode using VGG-16 on CIFAR-100. The achieved accuracy is 70.54\%, indicating the impact of random spike rearrangement.}
\label{fig-shuffle-membrane-pot}
\end{figure}

\begin{figure}[!htbp]
\centering
\includegraphics[width=\textwidth]{./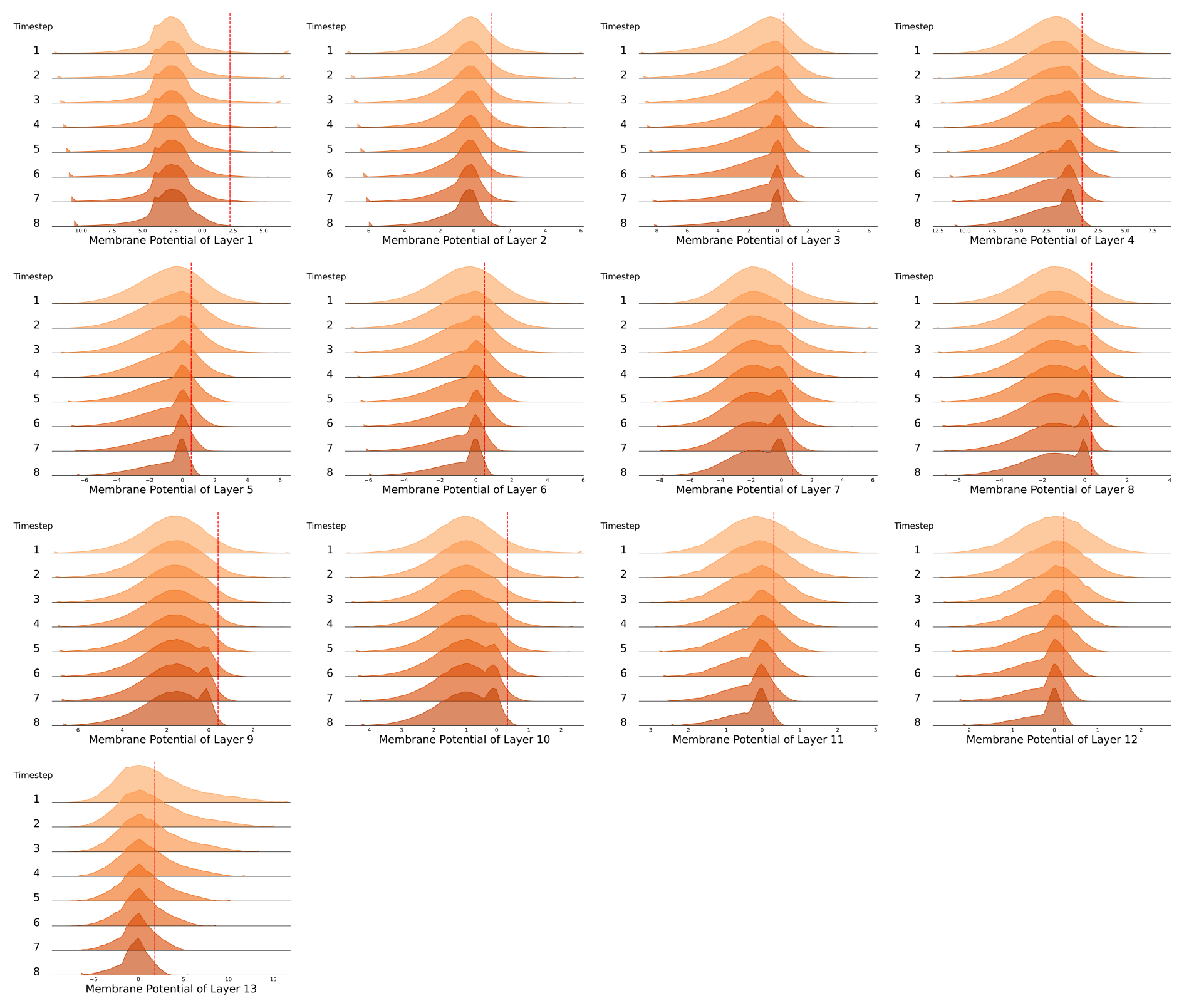}
\caption{Membrane potential of the first channel (randomly selected) before firing in SNNC probabilistic mode using VGG-16 on CIFAR-100. The accuracy increases to 73.42\%.}
\label{fig-prob-membrane-pot}
\end{figure}


\newpage
\clearpage

\section{Permutations and ANN-SNN conversion}\label{app permutations}

\textbf{Heuristics behind permutations} We come back to the original motivation, and the mysterious effect of temporal misalignment. To this end, we notice that permutations may act as a ``uniformizer'' of the inputs to the spiking neuron, which is highly related to notions of phase lag or unevenness of the inputs (see \cite{li2022bsnn} and \cite{bu2022optimal}, respectively). 
\begin{restatable}{theorem}{exppermutations}\label{thm permutations expected} Suppose we have $N$ spiking neurons that produced spike trains $s_i[1], s_i[2],\dots, s_i[T]$, $i=1,\dots,N$. Furthermore, suppose that these spike trains are modulated with weights $w_1,\dots,w_N$, and as such give input to a neuron (say from the following layer) in the form $x[t]=\sum w_i s_i[t]$, for $t=1,\dots, T$. For a given permutation $\pi = (\pi_1,\dots,\pi_N)$, 
let $\pi s_i$ denote the permutation of the spike train $s_i$. Then, for every $t_1,t_2\in \{1,2,\dots, T\}$, 
$$
E_{\pi}[\sum w_i\pi s_i[t_1]]=E_{\pi}[\sum w_i\pi s_i[t_2]].
$$
\end{restatable}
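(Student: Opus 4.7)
The statement is a symmetry claim about random permutations, and the plan is to reduce it to the observation that any single position of a uniformly permuted sequence has the same marginal distribution.

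First I would expand the expectation using linearity, writing
\begin{equation}
\mathbb{E}_\pi\Bigl[\sum_{i=1}^N w_i\, \pi s_i[t]\Bigr] \;=\; \sum_{i=1}^N w_i\, \mathbb{E}_\pi\bigl[\pi s_i[t]\bigr],
\end{equation}
for each $t$. This peels the weights out of the expectation and reduces the problem to analyzing, per neuron $i$, the marginal of $\pi s_i[t]$ at a single fixed time $t$. Since the weights $w_i$ and $N$ play no further role, the entire claim will follow once I establish that $\mathbb{E}_\pi[\pi s_i[t]]$ is independent of $t$ for each $i$.

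Next I would fix an index $i$ and unpack the notation: according to the statement, $\pi = (\pi_1,\dots,\pi_N)$ is a tuple of permutations of $\{1,\dots,T\}$ with $\pi_i$ acting on the spike train $s_i$, and the expectation is taken with respect to $\pi_i$ drawn uniformly from the symmetric group $S_T$. For such a uniform $\pi_i$, the value $\pi_i s_i[t] = s_i[\pi_i^{-1}(t)]$ has, as $\pi_i^{-1}(t)$ runs uniformly over $\{1,\dots,T\}$, the same marginal distribution for every $t$. Hence
\begin{equation}
\mathbb{E}_{\pi_i}\bigl[\pi_i s_i[t]\bigr] \;=\; \frac{1}{T}\sum_{k=1}^T s_i[k],
\end{equation}
which does not depend on $t$.

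Substituting back and taking $t=t_1$ and $t=t_2$ gives
\begin{equation}
\mathbb{E}_\pi\Bigl[\sum_i w_i \pi s_i[t_1]\Bigr] \;=\; \sum_i w_i \cdot \frac{1}{T}\sum_{k=1}^T s_i[k] \;=\; \mathbb{E}_\pi\Bigl[\sum_i w_i \pi s_i[t_2]\Bigr],
\end{equation}
which is the desired equality. There is essentially no obstacle here; the whole content is the symmetry of the uniform measure on $S_T$, and the only point that requires minor care is clarifying the permutation model (one independent permutation per spike train, versus a single shared permutation). The argument works identically in either case, because in both cases each individual position has uniform marginal over the $T$ original entries, so the conclusion is robust to that ambiguity.
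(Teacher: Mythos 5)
Your proof is correct and follows essentially the same route as the paper: reduce by linearity to a single neuron and a single time step, then invoke the symmetry of the uniform permutation measure to conclude the marginal at any position is the same. The only cosmetic difference is that you compute this marginal explicitly as $\frac{1}{T}\sum_k s_i[k]$, whereas the paper establishes the same position-independence by a cyclic-shift bijection counting permutations that place a spike at a given step.
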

\begin{proof}
It is enough to prove that for each $i=1,\dots,N$, 
\begin{equation}\label{eq exp perm}
    E_\pi[s_i[t_1]] = E_\pi[s_i[t_2]].
\end{equation}
Let $A(t_i)$ be the cardinality of the set of all the permutations that end up with a spike in step $t_i$, and note that the probability of having a spike at $t_i$ is then $\frac{A(t_i)}{T!}$. But, for each permutation that ends up with a spike at $t_i$, one can find a permutation that ends up with a spike at $t_2$ (by simply applying a cyclic permutation) and moreover this correspondence is bijective. In particular $A(t_i)$ is independent of $i$. The equation \eqref{eq exp perm} and the statement follow.
\end{proof}
The previous result deals with the expected outputs with respect to the permutations. When it comes to the action of a single permutation, we make the following observation. The effect of a single permutation is mostly visible on spike trains that have a \textbf{low number of spikes}. This, in turn, is related to the situation where the input to the neuron is low throughout time, and it takes longer for a neuron to accumulate enough potential in order to spike, hence the neuron spikes at a later time during latency. In this case, a single permutation of the output spike(s) actually move the spikes forward in time (in general) and as such contributes to the elimination of the unevenness error, which appears when the input to a neuron in the beginning is higher than the average input through time (hence, the neuron produces superfluous spikes in the beginning, which shouldn't be the case).

\begin{table}[H]
\caption{Recorded accuracy after $t\leq T$ time steps, when the baseline model is "permuted" in latency $T$. Setting is VGG-16, CIFAR-100.}
\label{tab: stabilization}
\centering
\scalebox{0.85}
{
\begin{threeparttable}
\begin{tabular}{@{}ccccccccc@{}}
\toprule
 Method & ANN & t=1 & t=2 & t=4 & t=8 & t=16 & t=32  \\ 
\toprule
 QCFS~\cite{bu2022optimal} &  & 49.09 & 63.22 & 69.29 & 73.89 & 75.98 & 76.52 &  \\
 \textbf{Ours (Permute)} & T=4 & 68.11 & 71.91 & 74.2 &  &  &  & \\
 \textbf{Ours (Permute)} & T=8 & 71.76 & 74.11 & 75.53 & 75.86 &  &  & \\
 \textbf{Ours (Permute)} & T=16  & 72.75 & 74.27 &	75.63 & 76.0 & 76.39 &  & \\
 \textbf{Ours ((Permute)} & T=32 & 73.15 & 75.23 & 75.74 & 76.27 & 76.59 & 76.52 & \\
\midrule
 RTS~\cite{deng2021optimal} &  & 1.0 & 1.03 & 23.76 & 43.81 & 56.23 & 67.61 &  \\
 \textbf{Ours (Permute)} & T=4 & 22.9 & 30.78 &	34.54 &  &  &  & \\
\textbf{Ours ((Permute)} & T=8 & 45.11 & 52.7 &	59.2 & 62.58 &  &  & \\
 \textbf{Ours ((Permute)} & T=16  & 54.58 & 64.37 &	68.6 & 70.8 & 71.79 &  & \\
 \textbf{Ours (Permute)} & T=32 & 62.76 & 69.12 & 71.76 & 73.31 & 74.09 & 74.6 & \\
\bottomrule
\end{tabular}
\end{threeparttable}
}
\end{table}

\textbf{Remarks:} 
\begin{enumerate}
    \item In Table \ref{tab: stabilization} we combine permutations with baseline models in fixed latency $T$. Afterwards, we record the accuracies of such "permuted" model for lower latencies $t$. We can notice a sharp increase in the accuracies compared to the baselines, and in particular, the variance in accuracies across $t$ is reduced. 
    \item \textbf{Baseline analysis:} 
    \begin{enumerate}
    \item SNN models converted from a pretrained ANN aim to approximate the ANN activation values with firing rates. In particular, in lower time steps, the approximation is too coarse as the firing rate has only few possibilities to use to approximate the ANN (continuous) values. For example, in $T=1$, the baselines are attempting to approximate ANN activations with binary values $0$ and $\theta$.
    \item Moreover, at each spiking layer, the spiking neurons at early time steps, use only the outputs of the previous spiking layer from the same, early, time steps. As this information is already too coarse, \textbf{the approximation error accumulates throughout the network}, finally yielding in models that are underperfoming in low latencies.
    \item With longer latencies, the model is using more spikes and is able to approximate the ANN values more accurately, and to correct the results from the first time steps. 
    \end{enumerate}
    \item \textbf{Effect of permutations:}
    \begin{enumerate}
        \item When performing permutations on spike trains after spiking layers in the baseline models, the input to the next spiking layer in lower time steps, \textbf{no longer depends only on the outputs of the previous layer in the same lower time steps, but it depends on the outputs in all time steps $T$}. 
        \item In particular, when spiking layer is producing spikes at time step $t=1$, it does so "taking into account" (via permutation) outputs at all the time steps from the previous spiking layer.
        \item As a way of example, consider two spiking neurons $N_1$ and $N_2$, where $N_2$ receives the weighted input from $N_1$. If a spiking neuron $N_1$ in one layer has produced spike train $s = [1,0,0,0]$, in approximating ANN value of $.25$, then a spiking neuron $N_2$ at the first time step will use 1 as the approximation and will receive the input $W\cdot 1$ from neuron $N_1$. However, after a generic permutation of $s$, the probability of having zero at the first time step of output of neuron $N_1$ is $\frac{3}{4}$ (as oppose to having 1 with probability $\frac{1}{4}$), and at the first time step neuron $N2$ will most likely receive the input $W\cdot 0=0$ from neuron $N_1$, which is a rather better approximation for $W\cdot .25$ than $W$ itself.
        \item This property of receiving input at lower $t$ but taking into account the previous layer spike outputs at all the time steps is not only exclusive to lower $t$. Indeed, at every time step $t\leq T$, the input at a spiking layer is formed by taking into account spiking train outputs from the previous layer at all the time steps, but having already accounted for for the observed input at the first $t<1$ steps. 
        \item In general, the permutations overall increase the performance of the baselines because the spike trains are "uniformized" in accordance to their rate, and the accumulation error is reduced. If a layer $l$ has produced spike outputs that well approximate the $l$ layer in ANN, then, after a generic permutation, at each time step starting with the first, the next layer is receiving the most likely binary approximation of those rates. 
        \item This is nothing but Theorem \ref{thm permutations expected} in visible action. 
        \item Besides Table \ref{tab: stabilization}, we provide further evidence on how permutation affect the baselines through the observed membrane potential in the following sections. 
    \end{enumerate}
\end{enumerate}

\subsection{The effect of permutations on performance: Further experiments}



\begin{figure}[!htbp]
\centering
\includegraphics[width=\textwidth]{./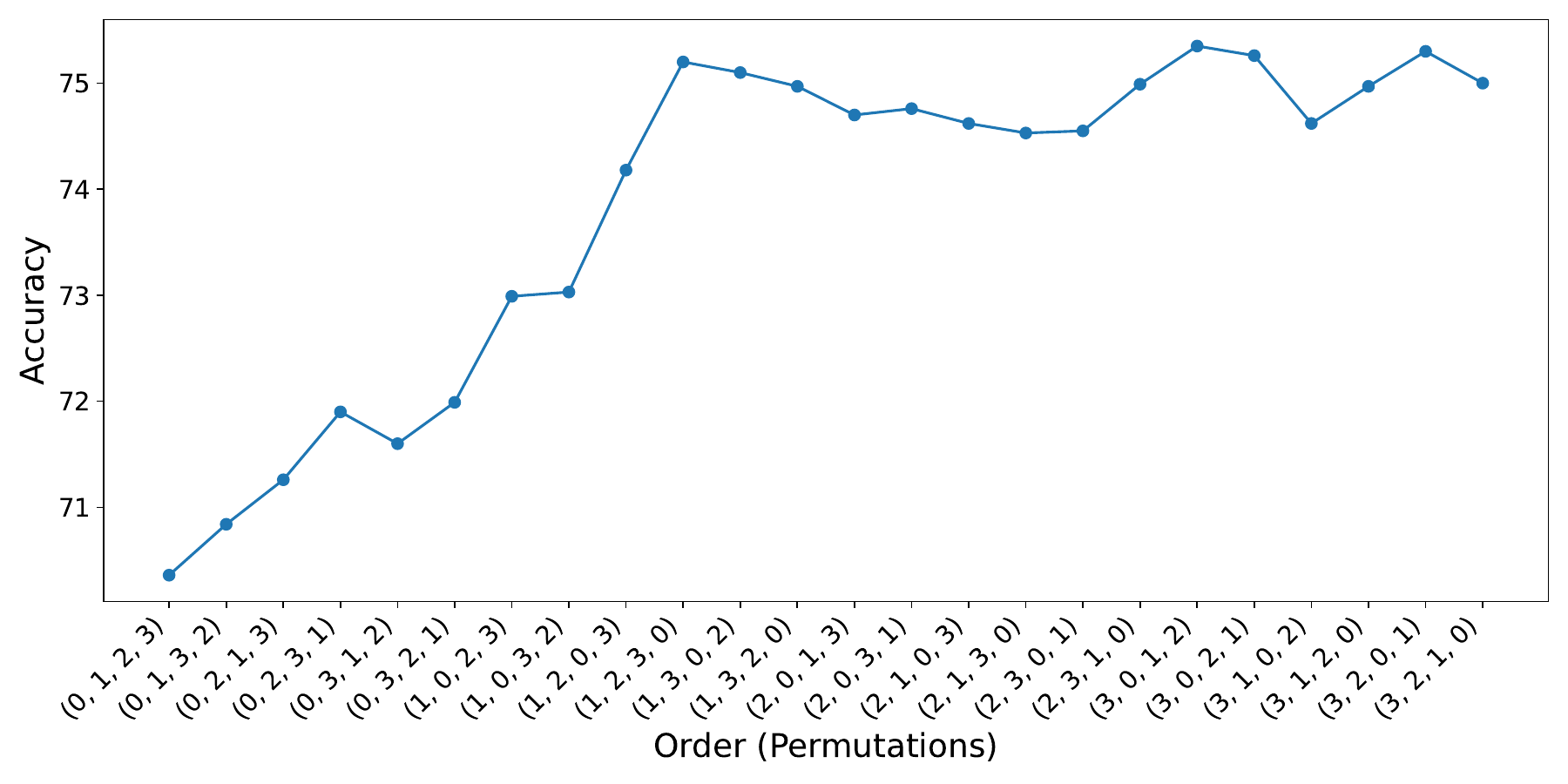}
\caption{Accuracy comparison for all \( T! \) permutations of input order over \( T=4 \) time steps using QCFS with VGG-16 on CIFAR-100. Results of permuted orders outperform the original, non-permuted order $(0,1,2,3)$. Baseline accuracy is $69.31\%$, The ANN accuracy is $76.21\%$.}
\label{fig-order-acc-1}
\end{figure}

\begin{figure}[!htbp]
\centering
\includegraphics[width=\textwidth]{./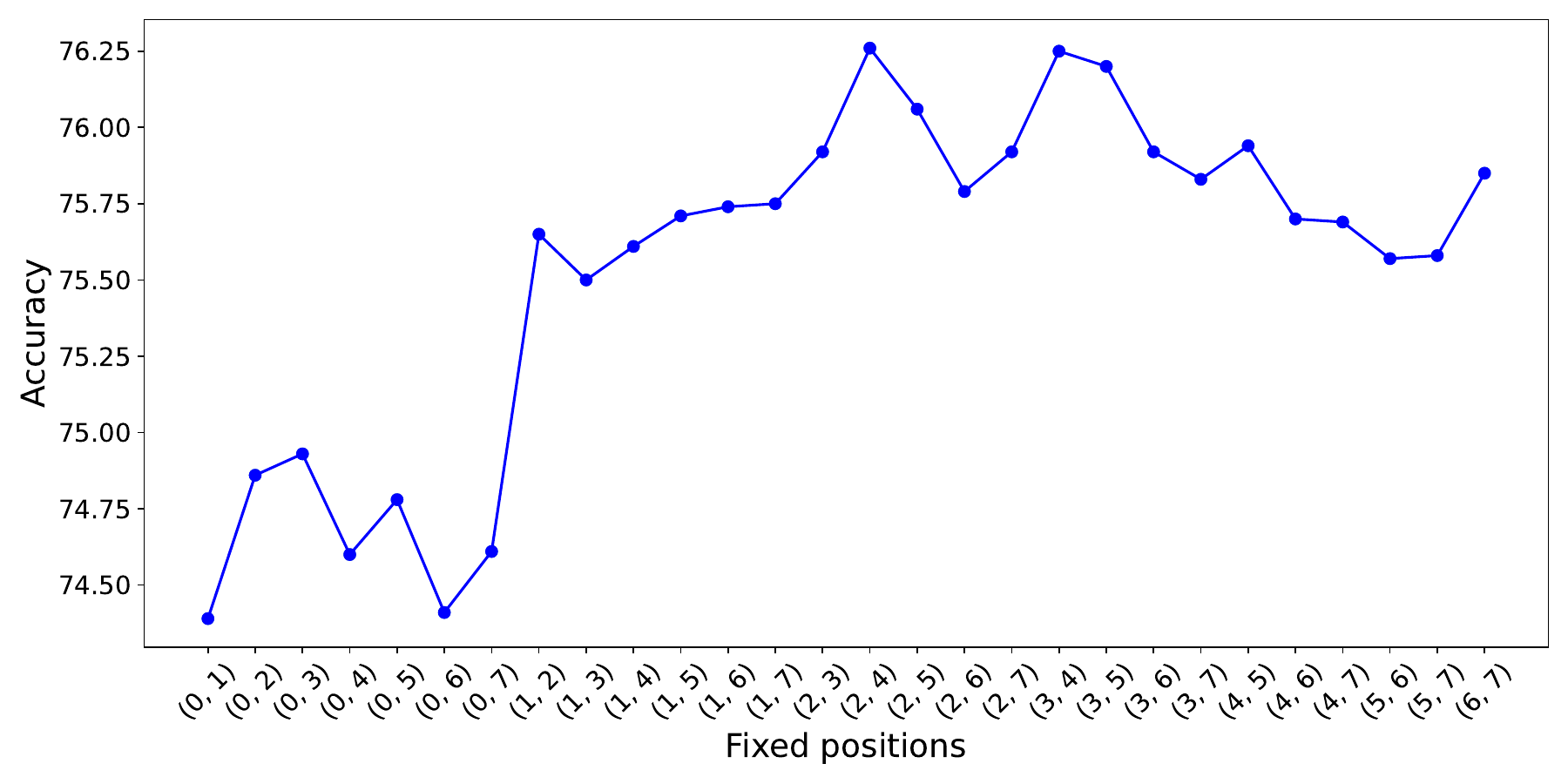}
\caption{Accuracy comparison for permutations over 8 time steps, fixing given pairs of time steps. Setting is VGG-16, CIFAR-100. The baseline (QCFS) accuracy is $73.89\%$, ANN accuracy is $76.21\%$.}
\label{fig-order-acc-2}
\end{figure}


\begin{figure}[!htbp]
\centering
\includegraphics[width=.7\textwidth]{./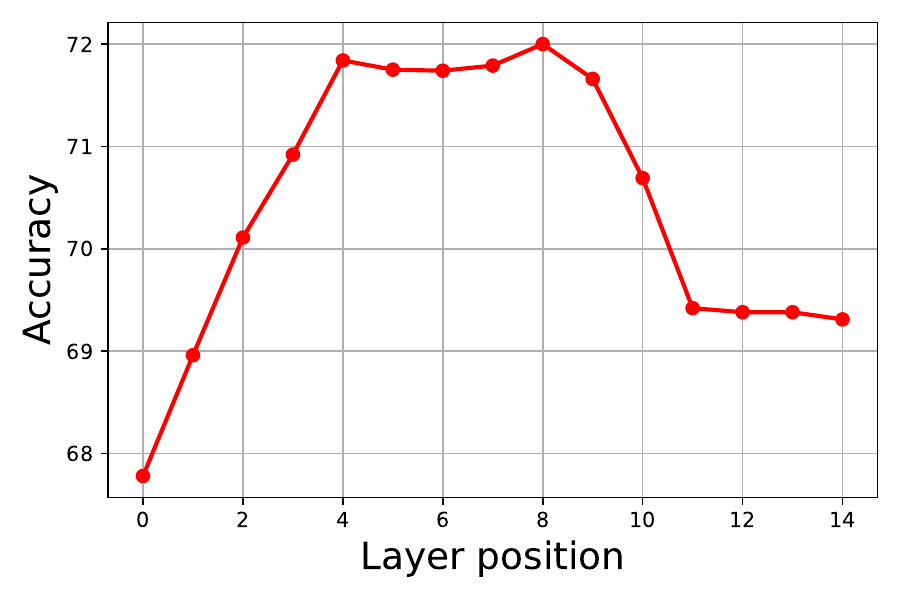}
\caption{Accuracy of the model when a permutation is applied on a single layer using QCFS baseline. Setting is VGG-16, $T=4$, CIFAR-100. Baseline accuracy is $69.31\%$, ANN accuracy is $76.31\%$}
\label{fig-order-acc-3}
\end{figure}

\begin{figure}[!htbp]
\centering
\includegraphics[width=.7\textwidth]{./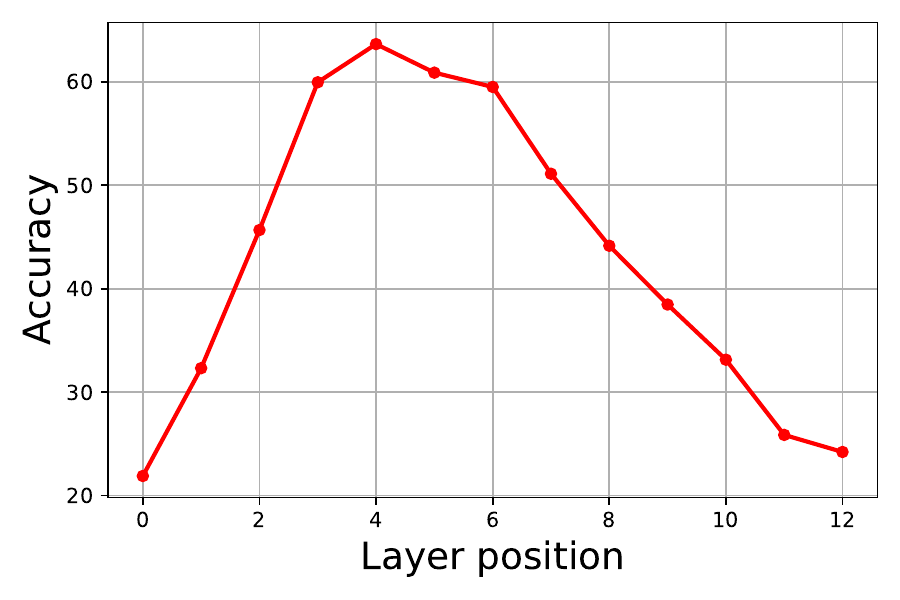}
\caption{Accuracy of the model when a permutation is applied on a single layer using SNNC baseline. Setting is VGG-16, $T=8$, CIFAR-100. Baseline accuracy without calibration is $24.22\%$, ANN accuracy is $77.87\%$}
\label{fig-order-acc-4}
\end{figure}

\end{document}